\documentclass{article}

\usepackage[preprint]{neurips_2026}

\usepackage[utf8]{inputenc} 
\usepackage[T1]{fontenc}    
\usepackage{hyperref}       
\usepackage{url}            
\usepackage{xurl}           
\usepackage{booktabs}       
\usepackage{amsfonts}       
\usepackage{nicefrac}       
\usepackage{microtype}      
\usepackage{xcolor}         
\usepackage{mathtools}
\usepackage{xspace}        
\usepackage{wrapfig}
\usepackage{amsmath}
\usepackage{multirow}
\usepackage{tabularx,booktabs}
\usepackage{enumitem}
\usepackage{graphicx}  
\usepackage{subcaption}
\usepackage[table]{xcolor}
\usepackage{array}
\usepackage{xurl}

\definecolor{uclablue}{RGB}{39, 116, 174}
\definecolor{bigaired}{RGB}{156, 0, 0}
\definecolor{mildblue}{RGB}{31,119,180}

\usepackage{tcolorbox}
\tcbuselibrary{breakable}
\definecolor{customyellow}{HTML}{FFFACD}

\usepackage[dvipsnames]{xcolor}
\definecolor{refinegreen}{RGB}{0, 128, 75}
\definecolor{scoregreen}{RGB}{34, 139, 34}

\usepackage{amsthm}         
\newtheorem{theorem}{Theorem}[section]
\newtheorem{proposition}[theorem]{Proposition}

\newtheorem{remark}[theorem]{Remark}

\usepackage{booktabs}
\usepackage[table]{xcolor}
\usepackage{array}
\usepackage{amsmath}
\definecolor{GRPOrow}{gray}{0.96}
\definecolor{FlowRLrow}{RGB}{225,236,255}
\definecolor{FlowBlue}{RGB}{80,120,210}
\definecolor{GRPOGray}{gray}{0.35}

\newcommand{\method}{\texttt{GFlowRL}\xspace}

\newcommand{\cmark}{\textcolor{green!60!black}{\checkmark}} 
\newcommand{\xmark}{\textcolor{red!80!black}{\times}}   

\title{GFlowRL: Scaling Distribution-Matching RL to Large Language Models}

\author{
  Xiaodong Liu$^{\dagger\ddagger*}$~
  Michael Xu$^{*}$~
  Jack W.~Stokes~
  Paul Smolensky~
  Doug Burger~
  Jianfeng Gao$^{\ddagger}$ \\
  Microsoft Research \\
  \vspace{2pt}
  \footnotesize
  $^{\dagger}$Project lead. \quad
  $^{\ddagger}$Corresponding authors. \quad 
  $^{*}$Equal contribution. \quad
}

\begin{document}

\maketitle

\begin{abstract}
Generative Flow Networks (GFlowNets) offer a promising alternative to reward-maximizing reinforcement learning (RL) for learning large reasoning models.
GFlowNets encourage diverse reasoning paths by matching reward distributions rather than collapsing to dominant modes. 
Recent work shows promise on math and code benchmarks, but scaling GFlowNet-style RL to modern post-training pipelines remains difficult. In the regimes that matter for LLM post-training, model size, rollout horizon, reward noise, and distributed-systems complexity all increase together, and a learned prompt-conditional partition function can become a source of gradient instability and engineering overhead rather than a useful normalizer. 
We address these challenges through systematic analysis to reveal what components are truly necessary for GFlowNet-based RL at scale. Our main finding is that the learned partition function, previously treated as essential, can be replaced by an in-batch Monte Carlo estimate computed from the rollout group already required for training. Based on this observation, we propose \method, a streamlined GFlowNet-style RL algorithm that removes the auxiliary partition network entirely while preserving the reward-distribution-matching objective. The recipe is completed by two stabilizers: importance-sampling correction for rollout/trainer drift, and asymmetric flow-gap clipping for outlier residuals.
\method improves both stability and performance compared to prior work. It exceeds all counterparts on math, code, and adversarial red-teaming benchmarks, reaching a Codeforces rating of \textit{2048} at the 14B scale (within $25$ Elo of o3-mini) and attaining the highest average ASR@1 on AdvBench and HarmBench, outperforming the previous SOTA multi-turn attacker in a regime where FlowRL, a previous GFlowNet-style RL method, diverges. The same recipe transfers to all evaluated MoE configurations up to 235B parameters, where FlowRL again fails to converge. To our knowledge, \method is the first GFlowNet-style RL algorithm to scale stably across both dense and sparse architectures for large reasoning models. Code will be released at \url{https://github.com/microsoft/gflowrl}.
\end{abstract}

\section{Introduction}
\label{sec:intro}
\vspace{-5pt}
Reinforcement learning (RL) has become the defining post-training paradigm for the current generation of large language models. The most capable reasoning systems deployed today, including OpenAI o1~\citep{openai_o1_2024} and o3~\citep{openai_o3_2025}, DeepSeek-R1~\citep{guo2025deepseek}, and Gemini~\citep{Gemini25}, all rely on RL as a central component of their post-training pipelines. The effectiveness of this stage directly determines model performance on high-value tasks such as mathematical reasoning, competitive programming, scientific problem solving, and agentic decision making. As models scale and reasoning chains grow longer, the choice of RL algorithms has become a first-order architectural decision.

The dominant RL approaches, including PPO~\citep{schulman2017proximal}, GRPO~\citep{shao2024deepseekmath}, OMD~\citep{team2025kimi} and their variants, are fundamentally reward maximizing. They concentrate probability mass on high reward modes, often collapsing solution diversity~\citep{xu2025echoes}. In chain-of-thought reasoning, where multiple valid solution paths are essential for generalization and robustness, this behavior introduces a structural limitation. Existing mitigations, such as entropy bonuses~\citep{shen2025entropy}, adaptive clipping, and token-level exploration, treat diversity~\citep{yu2025dapo} as a post hoc regularizer while leaving the core reward maximizing objective unchanged.

Generative Flow Networks (GFlowNets)~\citep{bengio2023gflownet,bengio2021flow,hu2024amortizing} offer a fundamentally different paradigm: instead of maximizing the expected reward, they learn policies that sample trajectories \emph{in proportion to their reward}. When applied to LLM post-training, this distribution-matching perspective directly addresses mode collapse, encouraging coverage of all high-reward reasoning paths rather than concentrating on one. Recent approaches such as FOR~\citep{yu2025flow} and FlowRL~\citep{zhu2026flowrl} are among the first to introduce this idea into LLM reasoning, adopting a trajectory balance objective with a learned approximation to the partition function $Z_\phi(\mathbf{x})$. In FlowRL, $Z_\phi(\mathbf{x})$ is parameterized as a three-layer MLP on top of the prompt’s final hidden state, whereas in FOR it is represented as a scalar shared across all prompts.

\textit{The scaling problem.}
 While FlowRL shows the promise of GFlowNet-style training, deploying it in the regimes that matter for modern LLM post-training remains challenging. Even on dense backbones, training must operate under a short optimization horizon, long reasoning rollouts, large pretrained policies, and tight distributed-systems budgets. In this setting, jointly training a new initialized partition model alongside the policy becomes increasingly brittle as scale grows: gradient spikes become more frequent, the auxiliary module adds optimizer and synchronization overhead, and the learned normalizer lags behind the much larger pretrained policy. MoE routing~\citep{guo2025deepseek,achiam2023gpt,grattafiori2024llama3herdmodels,team2025kimi,Gemini25} amplifies these issues further by introducing nondeterminism and implicit off-policy mismatch between sampling and training~\citep{thinkingmachines2025nondeterminism,yao2025offpolicy_mismatch}, but MoE is best viewed as the hardest stress test of a more general scaling bottleneck rather than the root cause itself.

\textit{The root cause.} We attribute these failures to a key design choice: the learnable partition function $Z_\phi$. In LLM post-training, there is a fundamental asymmetry between the two components being optimized. The policy is initialized from a pretrained model with billions of well-trained parameters and requires only modest updates, whereas $Z_\phi$ is randomly initialized and must learn a complex quantity from scratch within the same limited training steps. As a result, $\log Z_\phi$ behaves largely as a source of noise for much of training, causing the trajectory balance objective to be dominated by variance rather than the reward signal. Section~\ref{subsec:analysis} provides a detailed analysis of the gradient behavior and training dynamics. Empirically, we have observed two consistent patterns: (1) replacing $Z_\phi$ with a random noise yields comparable performance, suggesting that it contributes little to modeling; and (2) FlowRL exhibits gradient norms orders of magnitude larger than GRPO, leading to significant optimization instability without bringing any visible gains in accuracy.

Based on this analysis, we propose \textbf{\method}, a streamlined GFlowNet-based RL algorithm that does not use the learned partition function. In its place, we use a simple yet effective in-batch Monte Carlo estimate of $\log Z$, estimated directly from the rollout group already sampled during GRPO for each prompt. This simple change has three consequences: (i) the gradient norms remain the same scale as that of GRPO, restoring training stability; (ii) the auxiliary network and its optimizer state in distributed training are all removed; and (iii) the algorithm uses a nearly identical infrastructure of GRPO, while retaining the reward-distribution-matching semantics of trajectory balance. 
The recipe is completed by two stabilizers: importance-sampling correction for rollout/trainer drift, and asymmetric flow-gap clipping for outlier residuals. To our knowledge, \method is the \textit{first GFlowNet-based RL algorithm to scale stably across both dense and sparse architectures}.

In summary, our contributions are threefold:
\begin{enumerate}[leftmargin=*, topsep=0pt, noitemsep]
    \item We identify the use of the learned partition function $Z_\phi$ as a key source of instability in GFlowNet-based RL training for LLMs. We provide empirical evidence that $Z_\phi$ offers almost no benefit for post-training 
    and instead can make optimization unnecessarily challenging.
    \item We propose \method, where $Z_\phi$ is replaced with an in-batch Monte Carlo estimate, 
    thus removing the auxiliary network and its associated distributed overhead while retaining the trajectory balance objective’s fixed point.
    \item We demonstrate stable training across both dense and sparse MoE architectures, including regimes where FlowRL diverges. At the 14B dense scale, \method reaches \textit{2048} Codeforces Elo, exceeding DeepCoder-14B by $+112$, FlowRL-14B by $+144$, and OpenAI's o1 by $+157$. On adversarial red-teaming, \method attains the highest average ASR@1 on AdvBench (\textit{82.5\%}) and HarmBench (\textit{79.5\%}), surpassing the prior SOTA multi-turn attacker in a noisy-reward regime where FlowRL diverges. Scaling to MoE, \method reaches \textit{1999} Codeforces Elo on Qwen3-30B-A3B with only 3B active parameters, and the same recipe transfers to Qwen3-235B-A22B on math.
    \end{enumerate}
\vspace{-4pt}

\section{Preliminaries}
\label{sec:prelim}
\vspace{-5pt}
We formulate chain-of-thought reasoning as a conditional generation task. Given a prompt $\mathbf{x} \in \mathcal{X}$ (e.g., a math problem), the policy $\pi_\theta(\mathbf{y} \mid \mathbf{x})$ autoregressively generates a response $\mathbf{y} = (y_1, y_2, \dots, y_T)$ containing intermediate reasoning and a final answer. A verifier provides a scalar reward $r(\mathbf{x}, \mathbf{y}) \in \mathbb{R}$. For example, the reward may be a binary correctness signal for math~\citep{shao2024deepseekmath} or unit-test pass rates for code~\citep{jain2024livecodebench}.

\noindent\textbf{Policy notation.}
A symbol $\pi(\mathbf{y} \mid \mathbf{x})$ denotes a policy, i.e., a conditional distribution over responses $\mathbf{y}$ given prompt $\mathbf{x}$. Following~\citep{schulman2017proximal, shao2024deepseekmath, yao2025offpolicy_mismatch}, $\pi_\theta$ is the trainable policy, $\pi_{\mathrm{ref}}$ the frozen reference (e.g., the pretrained model), and $\pi_{\mathrm{old}}$ the rollout policy used to sample the current group; $\pi_{\mathrm{old}}$ is a snapshot of $\pi_\theta$ used for rollout generation while $\pi_\theta$ is updated against those rollouts.

\noindent\textbf{Reward-maximization.} Typical reward-maximizing objectives such as PPO~\citep{schulman2017proximal} and GRPO~\citep{shao2024deepseekmath} optimize
\begin{equation}
    \mathcal{J}_{\text{RM}}(\theta) = \mathbb{E}_{\mathbf{x} \sim \mathcal{D},\, \mathbf{y} \sim \pi_\theta(\cdot \mid \mathbf{x})} \big[\, r(\mathbf{x}, \mathbf{y}) \,\big],
\label{eq:reward_max}
\end{equation}
which tends to concentrate the probability mass on the single highest-reward mode of the response distribution (Figure~\ref{fig:step500}(a)) and leads to mode collapse and reduced solution diversity~\citep{xu2025echoes}.

\noindent\textbf{Reward distribution matching.}
An alternative is to sample responses in proportion to their reward rather than maximizing it. Define the target distribution as the softmax
\begin{equation}
p^\star(\mathbf{y} \mid \mathbf{x}) = \frac{1}{Z(\mathbf{x})} \exp\!\left({\beta} r(\mathbf{x}, \mathbf{y}) \right), \qquad Z(\mathbf{x}) = \sum_{\mathbf{y} \in \mathcal{Y}} \exp\!\left( {\beta r(\mathbf{x}, \mathbf{y})} \right),
\label{eq:target_dist}
\end{equation}
where $\beta > 0$ is an inverse temperature controlling the sharpness of the distribution, and $Z(\mathbf{x})$ is the (intractable) partition function. Minimizing the reverse KL divergence $D_{\mathrm{KL}}(\pi_\theta \,\|\, p^\star)$ encourages the policy to cover all high-reward modes proportionally, rather than collapsing onto one~\citep{bengio2023gflownet,bengio2021flow}. Directly minimizing this KL is infeasible due to the intractable $Z(\mathbf{x})$, motivating surrogate objectives that enforce the same fixed point while learning $Z$ as a side product.

\noindent\textbf{The trajectory balance objective in LLM reasoning.} FOR~\citep{yu2025flow} and FlowRL~\citep{zhu2026flowrl} both instantiate this distribution-matching principle through the trajectory balance (TB) objective~\citep{malkin2022tb} from GFlowNets. For a prompt $\mathbf{x}$ and sampled response $\mathbf{y}$, the TB loss is
\vspace{-2pt}
\begin{equation}
\mathcal{L}_{\text{TB}}(\theta, \phi; \mathbf{x}, \mathbf{y}) = \left( \log Z_\phi(\mathbf{x}) + \log \frac{\pi_\theta(\mathbf{y} \mid \mathbf{x})}{\pi_{\mathrm{ref}}(\mathbf{y} \mid \mathbf{x})} - {\beta} r(\mathbf{x}, \mathbf{y}) \right)^{\!2},
\label{eq:tb_loss}
\end{equation}
where $Z_\phi(\mathbf{x})$ is a \emph{learned} approximation to the true partition function. In FlowRL, $Z_\phi(\mathbf{x})$ is a 3-layer MLP on top of the prompt's final hidden state, whereas in FOR, the partition function is a single shared variable that is used across all prompts.
\vspace{-6pt}
\section{Methodology}
\label{sec:method}
\vspace{-6pt}
\subsection{Why the Learned Partition Function Fails}
\label{subsec:analysis}
While the TB objective is theoretically principled, we argue that its standard instantiation is poorly suited to the LLM post-training regime. The crux is a \textit{learning-horizon mismatch} between the two components of the joint optimization.

\noindent\textbf{The learning-horizon mismatch.}
The policy $\pi_\theta$ is initialized from a pretrained LLM checkpoint with billions of parameters that encode rich linguistic and mathematical priors from trillions of tokens of pretraining~\citep{qwen2.5}. Post-training RL needs only to \textit{refine} this strong prior, and typically runs for a few hundred gradient updates. The partition function $Z_\phi$, in contrast, is a freshly initialized network (e.g., 3-layer MLP as in FlowRL) that must learn a complex, prompt-conditional quantity \textit{from scratch} within the same short window. This asymmetry means that throughout most of training, $\log Z_\phi$ is effectively a random function of the prompt.

\noindent\textbf{Empirical evidence I: the partition function contributes little.}
If the partition function $Z_\phi$ provides a meaningful learning signal, replacing it with noise should degrade performance. We test this by removing $Z_\phi$ entirely and substituting it with random samples drawn from a normal distribution with mean $\mu = 0.5$ and standard deviation $\sigma = 1$. Detailed hyperparameters are listed in Table~\ref{tab:hyperparams}. Surprisingly, the performance remains unchanged and even slightly improves: the random variant achieves $36.19\%$ average accuracy versus $35.61\%$ for the learned partition function. This indicates that $Z_\phi$ does not encode useful information for the policy and is effectively ignored during training.

\begin{table*}[h]
\centering
\vspace{-6pt}
\caption{Gradient norm statistics over 421 training steps.
         $^\dagger$FlowRL statistics include 55 explosion steps ($\geq 10^6$);
         excluding them yields median $= 1.1\times10^3$.}
\label{tab:gradnorm}
\small
\setlength{\tabcolsep}{4pt}
\begin{tabular}{lrrrrrr}
\toprule
\textbf{Method} & \textbf{Min} & \textbf{Max} & \textbf{Mean} & \textbf{Median} & \textbf{Std} & \textbf{Expl. ($\geq 10^6$)} \\
\midrule
\textbf{FlowRL}$^\dagger$ 
& $1.58\times10^{2}$  
& $9.59\times10^{16}$ 
& $3.23\times10^{14}$ 
& $1.14\times10^{3}$  
& $4.89\times10^{15}$ 
& 55 \\
\textbf{GFlowRL}          
& $3.11\times10^{-2}$ 
& $6.18$              
& $6.98\times10^{-2}$ 
& $4.30\times10^{-2}$ 
& $3.08\times10^{-1}$ 
& 0 \\
\textbf{GRPO}             
& $1.68\times10^{-1}$ 
& $5.90$              
& $2.38\times10^{-1}$ 
& $1.99\times10^{-1}$ 
& $3.37\times10^{-1}$ 
& 0 \\
\bottomrule
\end{tabular}
\vspace{-6pt}
\end{table*}
\noindent\textbf{Empirical evidence II: the partition function destabilizes training.}
Although $Z_\phi$ provides no measurable benefit, it introduces severe optimization instability. Gradient norm statistics in Tab~\ref{tab:gradnorm} reveal a stark contrast: while GRPO and GFlowRL maintain well-behaved norms (means of $0.24$ and $0.07$, with maxima below $6.2$), FlowRL exhibits extreme variance, with a mean of $3.2 \times 10^{14}$, a maximum of $9.6 \times 10^{16}$, and a standard deviation of $4.9 \times 10^{15}$ (Fig.~\ref{fig:grad_norm} in Appendix~\ref{app:training})\footnote{FlowRL training logs sourced from \url{https://wandb.ai/xuekaizhu0/FlowRL/runs/pa62rs4x}.}. Notably, $55$ out of $421$ steps produce gradient explosions exceeding $10^6$. This instability arises from the unbounded contribution of $\log Z_\phi(\mathbf{x})$, in training when the network is poorly calibrated, so policy updates are dominated by high-variance signals unrelated to the reward or likelihood.

Together, these results reveal a clear pattern: the partition function neither improves performance nor behaves benignly. It acts as a source of noise that degrades optimization stability without contributing a useful learning signal, motivating a simple question: \textit{if the learned partition function is both unnecessary and harmful, can we replace it with a non-learned estimator while preserving the distribution-matching objective?}

\subsection{GFlowRL}
Our proposal is to replace the learned network $\log Z_\phi(\mathbf{x})$ with an in-batch Monte Carlo estimate computed from the rollouts themselves. For each prompt $\mathbf{x}$, GRPO-style training already samples a group of $G$ rollouts $\{\mathbf{y}^{(1)}, \dots, \mathbf{y}^{(G)}\}$ from the rollout policy $\pi_{\phi_{\text{old}}}$. Inspection of the TB loss (Eq.~\ref{eq:tb_loss}) reveals that, at the optimum, $\log Z(\mathbf{x})$ equals $\beta r(\mathbf{x}, \mathbf{y}) + \log \pi_{\theta_{\text{ref}}}(\mathbf{y} \mid \mathbf{x}) - \log \pi_\theta(\mathbf{y} \mid \mathbf{x})$ for every $\mathbf{y}$ in the support of $p^\star$. This suggests the natural in-batch estimator of $\log Z_t$:
\begin{equation}
\mathcal{Z}_t(\mathbf{x}) \;\coloneqq\; \frac{1}{G} \sum_{i=1}^{G} \left({\beta} {r(\mathbf{x}, \mathbf{y}^{(i)})} + \log \pi_{\theta_{\text{ref}}}(\mathbf{y}^{(i)} \mid \mathbf{x}) - \log \pi_{\phi_{\text{old}}}(\mathbf{y}^{(i)} \mid \mathbf{x}) \right),
\label{eq:batch_logZ}
\end{equation}
which averages the per-sample TB targets across the current rollout group. 
This estimator builds on the log-partition variance loss of~\citet{robust-scheduling}, which bypasses learning $\log Z$ by exploiting a property of the TB optimum~\citep{malkin2022tb}: every trajectory yields the same implicit target, but as shown in Appendix~\ref{app:comp_logs}, this loss barely improves over the backbone in our setting. In their setting, the condition is a computation graph and trajectories are sampled schedules; in ours, the condition is the prompt $\mathbf{x}$ and trajectories are on-policy rollouts, e.g., the $G$ samples drawn per prompt in GRPO. Rather than minimize the variance of these targets across trajectories sharing the same condition, we take their in-batch mean as a stop-gradient baseline that plugs directly into the policy-gradient update. This is a natural fit for LLM post-training, where the rollout group is already the unit of computation.

\paragraph{The \method objective.}
Substituting $\mathcal{Z}_t(\mathbf{x})$ into Eq.~\ref{eq:tb_loss}, the distribution-matching residual for rollout $\mathbf{y}^{(i)}$ takes the form
\begin{equation}
\Delta^{(i)}(\theta) \;=\; \operatorname{sg}\!\left[\mathcal{Z}_t(\mathbf{x})\right] \;+\; \frac{1}{|\mathbf{y}^{(i)}|} \log \frac{\pi_\theta(\mathbf{y}^{(i)} \mid \mathbf{x})}{\pi_{\theta_{\text{ref}}}(\mathbf{y}^{(i)} \mid \mathbf{x})} \;-\; \beta r(\mathbf{x}, \mathbf{y}^{(i)}),
\label{eq:residual_main}
\end{equation}
where $|\mathbf{y}^{(i)}|$ is the response length and $\operatorname{sg}[\cdot]$ is the stop-gradient operator. $\mathcal{Z}_t(\mathbf{x})$ enters only through $\operatorname{sg}[\cdot]$: it serves as a baseline that centers the residual but carries no gradient of its own, eliminating the auxiliary optimization over a learned partition function entirely. Reasoning rollouts can span thousands of tokens, causing instability as log-probabilities scale with length; we normalize by response length to prevent long sequences from dominating the loss~\citep{yu2025dapo,zhu2026flowrl}.

A naive squared-residual objective $\tfrac{1}{G}\sum_i w^{(i)} (\Delta^{(i)})^2$ is still vulnerable to outlier rollouts whose residual is arbitrarily large, inducing destabilizing updates. To control this, we decompose $\Delta^{(i)}(\theta)$ into a \emph{flow gap} evaluated at the rollout policy,
\begin{equation}
g^{(i)} \;\coloneqq\; \operatorname{sg}\!\left[\mathcal{Z}_t(\mathbf{x})\right] \;+\; \frac{1}{|\mathbf{y}^{(i)}|} \log \frac{\pi_{\mathrm{old}}(\mathbf{y}^{(i)} \mid \mathbf{x})}{\pi_{\mathrm{ref}}(\mathbf{y}^{(i)} \mid \mathbf{x})} \;-\; {\beta} r(\mathbf{x}, \mathbf{y}^{(i)}),
\label{eq:flow_gap}
\end{equation}
plus a policy-update term $\tfrac{1}{|\mathbf{y}^{(i)}|}\log\bigl(\pi_\theta(\mathbf{y}^{(i)}\!\mid\!\mathbf{x})/\pi_{\mathrm{old}}(\mathbf{y}^{(i)}\!\mid\!\mathbf{x})\bigr)$ that carries the gradient, so that $\Delta^{(i)}(\theta) = g^{(i)} + \tfrac{1}{|\mathbf{y}^{(i)}|}\log(\pi_\theta/\pi_{\mathrm{old}})$. Following the spirit of trust-region methods~\citep{schulman2017proximal, yu2025dapo}, we clip the flow gap asymmetrically:
\begin{equation}
\tilde{g}^{(i)} \;=\; \operatorname{clip}\!\left(g^{(i)},\, -\epsilon_{\text{low}},\, +\epsilon_{\text{high}}\right).
\label{eq:flow_gap_clip}
\end{equation}
The asymmetric bounds $\epsilon_{\text{low}} < \epsilon_{\text{high}}$ give more room for positive corrections than negative ones. This is especially useful on math reasoning tasks, where correct solutions can be under-sampled early in training and the update should be able to push probability mass upward more aggressively than it pushes already-favored trajectories downward. Substituting $\tilde{g}^{(i)}$ for $g^{(i)}$ in the residual and weighting by importance-sampling ratios yields the \method training loss:
\begin{equation}
\mathcal{L}_{\text{\method}}(\theta; \mathbf{x}) \;=\; \frac{1}{G} \sum_{i=1}^{G} w^{(i)} \left(\tilde{g}^{(i)} \;+\; \frac{1}{|\mathbf{y}^{(i)}|} \log \frac{\pi_\theta(\mathbf{y}^{(i)} \mid \mathbf{x})}{\pi_{\mathrm{old}}(\mathbf{y}^{(i)} \mid \mathbf{x})}\right)^{\!2}.
\label{eq:gflowrl_loss}
\end{equation}
The importance-sampling weight
$w^{(i)}=\min\!\left( \frac{\pi_\theta(\mathbf{y}^{(i)} \mid \mathbf{x})}{\pi_{\mathrm{old}}(\mathbf{y}^{(i)} \mid \mathbf{x})}, \; 1 + \epsilon \right)$
corrects for the mismatch between the rollout policy $\pi_{\mathrm{old}}$ and the current policy $\pi_\theta$ under optimization~\citep{yao2025offpolicy_mismatch}.

\noindent\textit{Why this works.}
When the flow-gap clip is inactive ($\tilde{g}^{(i)} = g^{(i)}$), the residual inside the square in Eq.~\ref{eq:gflowrl_loss} is exactly $\Delta^{(i)}(\theta)$. Thus, without length normalization, the objective preserves the same target fixed point as trajectory balance: any zero-loss self-consistent fixed point satisfies $\pi_\theta(\mathbf{y} \mid \mathbf{x}) \propto \pi_{\theta_{\text{ref}}}(\mathbf{y} \mid \mathbf{x}) \exp(\beta r(\mathbf{x}, \mathbf{y}))$ (Proposition~\ref{prop:optimality}, Appendix~\ref{app:proof}). This result is a characterization of the desired fixed point, not a guarantee that the finite-batch nonconvex optimization dynamics necessarily converges to it.
The clip is a safeguard on outlier rollouts: at the fixed point the flow gap is zero, so the clip is inactive in a neighborhood of the optimum and does not perturb the stationary distribution (Remark~\ref{rem:clip}). Length normalization introduces a mild approximation whose effect vanishes when response lengths are approximately uniform across rollouts, which holds in practice for reasoning tasks (Remark~\ref{rem:length_norm}). The key practical difference is that $\mathcal{Z}_t(\mathbf{x})$ is an estimator whose scale is automatically commensurate with $r/\beta$ and $\log \pi_\theta$, eliminating the gradient explosion observed with the learned $Z_\phi$. Moreover, $\mathcal{Z}_t(\mathbf{x})$ requires no additional parameters, no separate learning rate, and no cross-module synchronization in distributed training.
\vspace{-6pt}
\section{Experiment}
\label{sec:exp}
\vspace{-6pt}

\subsection{Experimental Setup}
\label{subsec:setup}
\vspace{-6pt}
\noindent\textbf{Models, datasets, and metrics.}
We evaluate \method across three reasoning domains. For \textit{math}, we train Qwen2.5-7B/32B~\citep{qwen2.5} on six standard benchmarks (AIME2024/2025~\citep{AIME24,AIME25}, AMC2023~\citep{AMC}, MATH-500~\citep{lightman2023let}, Minerva~\citep{minerva}, Olympiad~\citep{he2024olympiadbench}), reporting Avg@16~\citep{yu2025dapo, zhu2026flowrl}. For \textit{coding}, we train Deepseek-R1-Distilled-Qwen-7B/14B~\citep{guo2025deepseek} on LiveCodeBench~\citep{jain2024livecodebench}, Codeforces~\citep{penedo2025codeforces} (Rating/Percentile, following~\citep{deepcoder2025}), and HumanEval+~\citep{chen2021evaluating}. For \textit{LLM safety}, we use SEMA~\citep{feng2026sema} on AdvBench~\citep{zou2023universal} and HarmBench~\citep{mazeika2024harmbench}, reporting ASR@1. MoE scaling uses Qwen3-30B-A3B and Qwen3-235B-A22B.

\noindent\textbf{Baselines and implementation.}
We compare against FlowRL~\citep{zhu2026flowrl} as the distribution-matching baseline, and PPO~\citep{schulman2017proximal}, GRPO/GRPO+~\citep{shao2024deepseekmath, deepcoder2025}, and REINFORCE++~\citep{hu2025reinforcepp} as reward-maximization baselines. Our PyTorch implementation builds on \textit{verl}, \textit{deepcoder}, \textit{FlowRL}, and \textit{SEMA}, using their default training configurations unless otherwise noted. Detailed information such as dataset and framework links are provided in Appendix~\ref{app:setup}.
\vspace{-6pt}

\subsection{Diagnostic Investigation}
\label{subsec:diagnostic}
\begin{figure}[!h]
    \centering
    \vspace{-5pt}
    \begin{subfigure}[b]{0.48\textwidth}
        \centering
        \includegraphics[width=\textwidth]{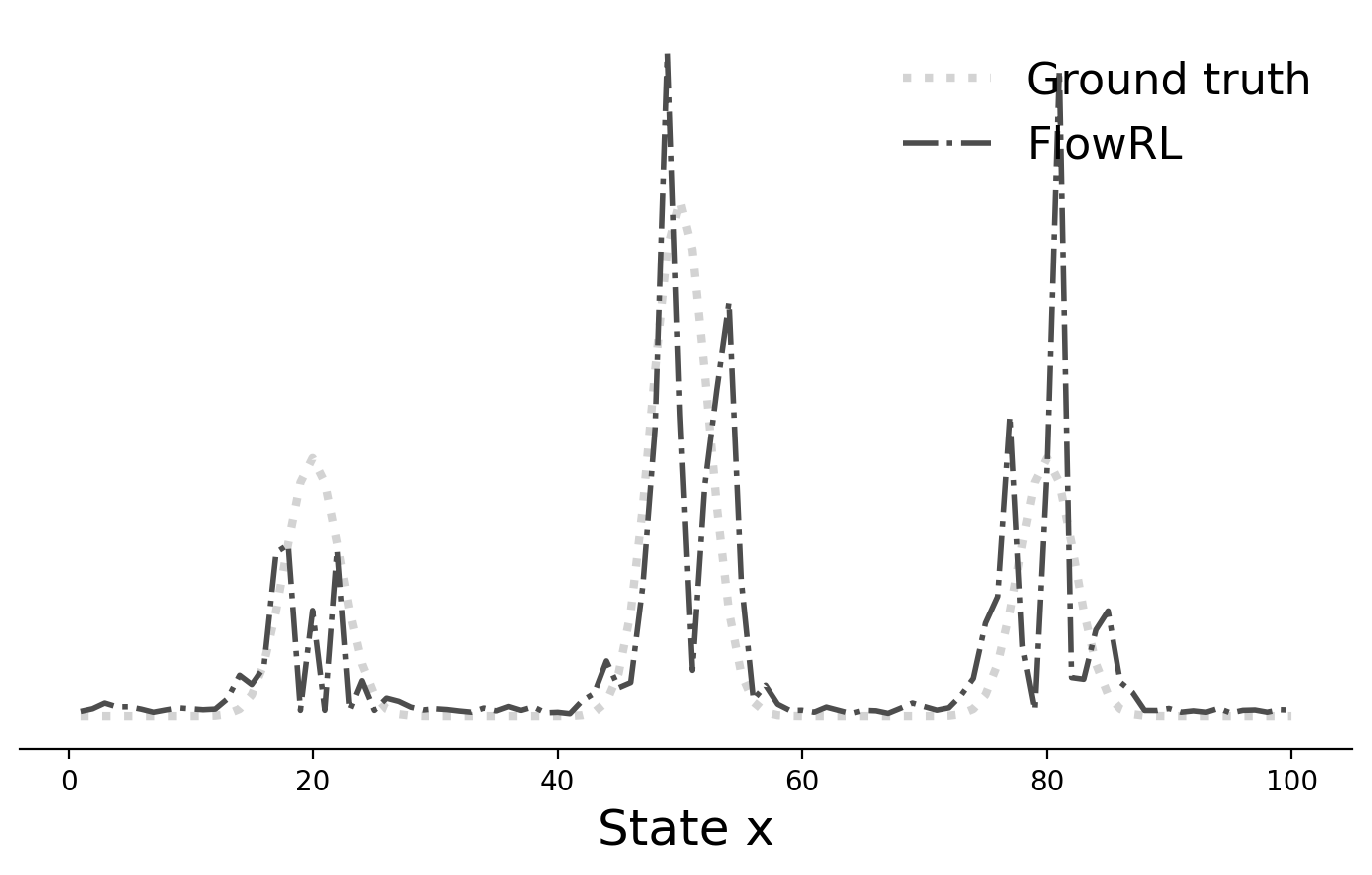}
        \caption{FlowRL}
        \label{fig:flowrl_dist500}
    \end{subfigure}
    \hfill
    \begin{subfigure}[b]{0.48\textwidth}
        \centering
        \includegraphics[width=\textwidth]{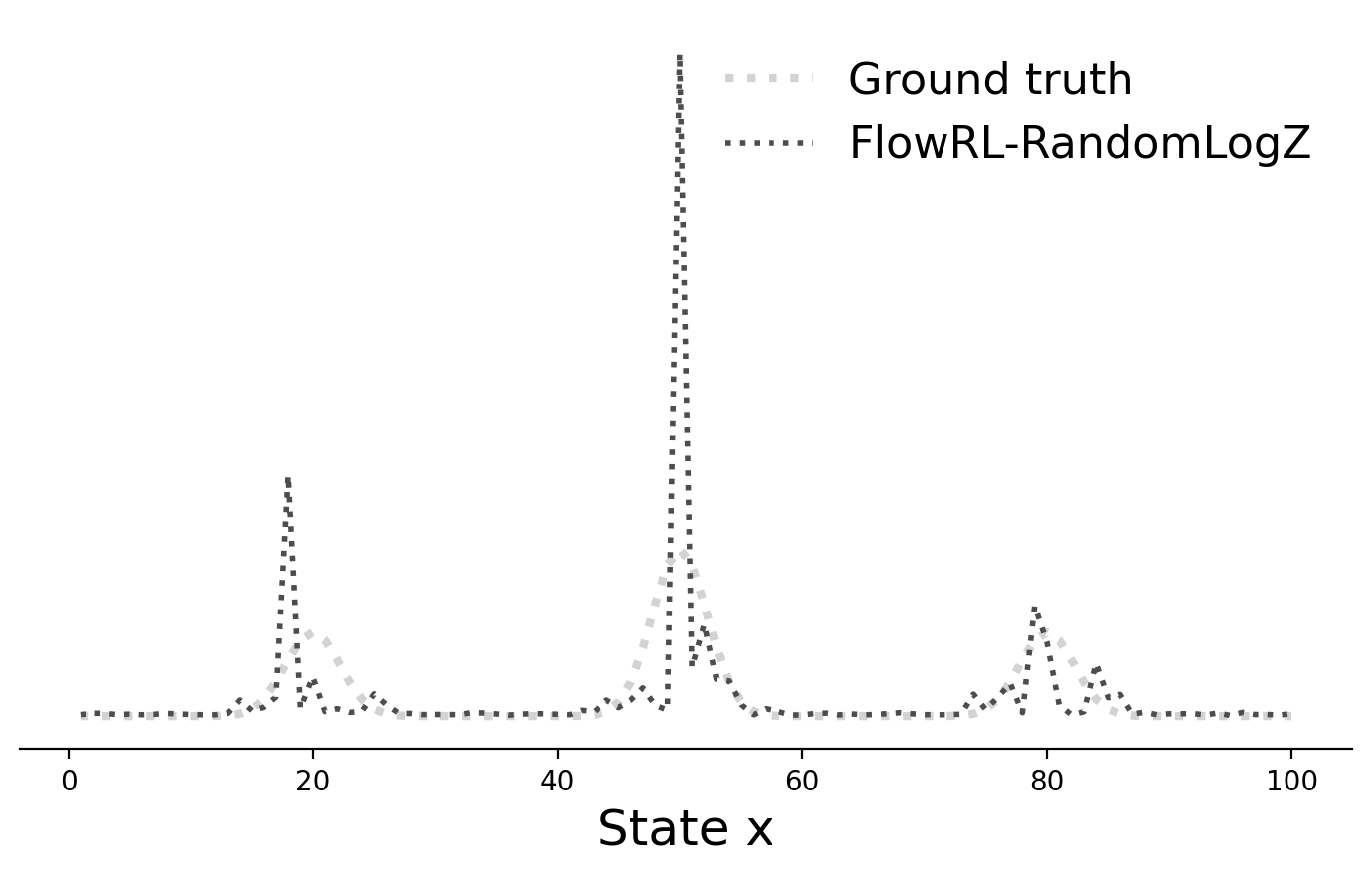}
        \caption{FlowRL-RandomLogZ}
        \label{fig:random_dist500}
    \end{subfigure}
    \vspace{0.2em} 
    \vspace{-3pt}
    \caption{Comparison of distribution matching across GRPO, FlowRL-RandomLogZ, FlowRL, and GFlowRL at the 500th training step.}
    \label{fig:dist500}
\end{figure}
\noindent\textbf{Random-$\mathbf{\log Z}$ replacement.}
We replace $\log Z_\phi(x)$ with an independent sample $\log Z \sim \mathcal{N}(0.5, 1)$ at each step, matching the empirical distribution of $\log Z_\phi(x)$ under standard FlowRL while keeping all other settings fixed. The results in Table~\ref{tab:math_7bresults} (FlowRL and FlowRL-RandomLogZ rows) are striking: replacing the learned $\log Z_\phi(x)$ with pure Gaussian noise \textit{does not} hurt average performance, and FlowRL-RandomLogZ even slightly exceeds standard FlowRL (36.19 vs.\ 35.63). This contradicts the hypothesis that $Z_\phi(x)$ functions as an informative prompt-conditional normalizer: if it did, randomizing it would cause a sharp drop. Instead, the learned partition function appears to contribute little beyond a noisy scalar bias, suggesting FlowRL's gains cannot be attributed to TB normalization and motivating the redesign in the next section.

To further probe this, we construct a synthetic target as a mixture of three Gaussians with distinct centers. As shown in Figures~\ref{fig:flowrl_dist500} and~\ref{fig:random_dist500}, FlowRL and FlowRL-RandomLogZ behave nearly identically, producing close-to-uniform samples that fail to capture the multimodal structure. This 100-dimensional setting reflects the few-hundred-step regime of LLM post-training, indicating the failure is not due to scale alone. Comparisons with GRPO and \method, along with training dynamics at steps $\{100, 500, 1000, 5000\}$, are provided in Appendix~\ref{app:dmc}.
\vspace{-5pt}
\subsection{Main Results}
\label{subsec:mainrst}
\vspace{-5pt}
\paragraph{Dense Models}

\begin{table*}[ht]
  \centering
  \caption{\textbf{Results on math reasoning benchmarks for dense models (Qwen2.5-7B).} We report Avg@16 accuracy with relative improvements over the backbone shown as subscripts. Positive gains are shown in \textcolor{scoregreen}{green} and negative changes in \textcolor{red}{red}. \method outperforms or matches all baselines.}
\label{tab:math_7bresults}
\vspace{-5pt}
  \resizebox{\textwidth}{!}{ 
  \begin{tabular}{l|llllll|l}
      \toprule[1.25pt]
        \textbf{Models} 
        & \textbf{AIME24} 
        & \textbf{AIME25}
        & \textbf{AMC23}
        & \textbf{MATH500}
        & \textbf{Minerva}
        & \textbf{Olympiad}
        & \textbf{Avg} \\
      \midrule[1.1pt]  
      \rowcolor[rgb]{0.93,0.93,0.93}
      \textcolor{gray}{\texttt{Backbone}} & \textcolor{gray}{4.38} & \textcolor{gray}{2.08} & \textcolor{gray}{30.78} & \textcolor{gray}{54.47} & \textcolor{gray}{22.38} & \textcolor{gray}{24.03} & \textcolor{gray}{23.02} \\
      R++ & $11.04_{\textcolor{scoregreen}{+6.66}}$ & $5.41_{\textcolor{scoregreen}{+3.33}}$ & $66.71_{\textcolor{scoregreen}{+35.93}}$ & $54.25_{\textcolor{red}{-0.22}}$ & $24.37_{\textcolor{scoregreen}{+1.99}}$ & $27.33_{\textcolor{scoregreen}{+3.30}}$ & 31.52 \\
      PPO & $9.38_{\textcolor{scoregreen}{+5.00}}$ & $7.29_{\textcolor{scoregreen}{+5.21}}$ & $63.43_{\textcolor{scoregreen}{+32.65}}$ & $57.98_{\textcolor{scoregreen}{+3.51}}$ & $26.53_{\textcolor{scoregreen}{+4.15}}$ & $27.25_{\textcolor{scoregreen}{+3.22}}$ & 31.98 \\
      GRPO & $13.54_{\textcolor{scoregreen}{+9.16}}$ & $9.79_{\textcolor{scoregreen}{+7.71}}$ & $64.53_{\textcolor{scoregreen}{+33.75}}$ & $57.05_{\textcolor{scoregreen}{+2.58}}$ & $23.06_{\textcolor{scoregreen}{+0.68}}$ & $26.88_{\textcolor{scoregreen}{+2.85}}$ & 32.48 \\
      FlowRL & $15.41_{\textcolor{scoregreen}{+11.03}}$ & $\mathbf{10.83}_{\textcolor{scoregreen}{+8.75}}$ & $54.53_{\textcolor{scoregreen}{+23.75}}$ & $66.96_{\textcolor{scoregreen}{+12.49}}$ & $31.41_{\textcolor{scoregreen}{+9.03}}$ & $34.61_{\textcolor{scoregreen}{+10.58}}$ & 35.63 \\
      FlowRL-RandomLogZ & $12.08_{\textcolor{scoregreen}{+7.70}}$ & $9.38_{\textcolor{scoregreen}{+7.30}}$ & $62.66_{\textcolor{scoregreen}{+31.88}}$ & $69.44_{\textcolor{scoregreen}{+14.97}}$ & $26.29_{\textcolor{scoregreen}{+3.91}}$ & $37.30_{\textcolor{scoregreen}{+13.27}}$ & 36.19 \\
      \midrule
      \method & $\mathbf{17.29}_{\textcolor{scoregreen}{+12.91}}$ & $9.79_{\textcolor{scoregreen}{+7.71}}$ & $\mathbf{67.66}_{\textcolor{scoregreen}{+36.88}}$ & $\mathbf{76.89}_{\textcolor{scoregreen}{+22.42}}$ & $\mathbf{33.62}_{\textcolor{scoregreen}{+11.24}}$ & $\mathbf{40.25}_{\textcolor{scoregreen}{+16.22}}$ & $\mathbf{40.92}$ \\
      \bottomrule[1.25pt]
  \end{tabular}}
  \vspace{-10pt}
\end{table*}
\begin{table*}[!h]
  \centering
  \caption{\textbf{Code benchmark results for dense models (DeepSeek-R1-Distill-Qwen-7B).}
Relative improvements are shown as subscripts, with gains in \textcolor{scoregreen}{green} and drops in \textcolor{red}{red}. \method achieves the best performance on all three benchmarks.
} 
\vspace{-5pt}
\label{tab:code_results}
  \resizebox{0.85\textwidth}{!}{  
  \begin{tabular}{l|ll|ll|l}
    \toprule[1.25pt]
    \textbf{Models} 
    & \multicolumn{2}{c|}{\textbf{LiveCodeBench}} 
    & \multicolumn{2}{c|}{\textbf{Codeforces}} 
    & \multicolumn{1}{c}{\textbf{HumanEval+}} \\  
    \cmidrule(lr){2-3} \cmidrule(lr){4-5} \cmidrule(lr){6-6}  
    & Avg@16 & Pass@16
    & Rating & Percentile
    & Avg@16 \\
    \midrule[1.1pt]
    \rowcolor[rgb]{0.93,0.93,0.93}
      \textcolor{gray}{Backbone}
    & \textcolor{gray}{30.68} & \textcolor{gray}{49.46}
    & \textcolor{gray}{886.68} & \textcolor{gray}{19.4\%}
    & \textcolor{gray}{80.90} \\
    
    R++ 
    & $30.46_{\textcolor{red}{-0.22}}$ & $52.68_{\textcolor{scoregreen}{+3.22}}$ 
    & $1208.03_{\textcolor{scoregreen}{+321.35}}$ & $56.8\%_{\textcolor{scoregreen}{+37.4\%}}$ 
    & $76.61_{\textcolor{red}{-4.29}}$ \\
    
    PPO               
    & $35.10_{\textcolor{scoregreen}{+4.42}}$ & $54.48_{\textcolor{scoregreen}{+5.02}}$
    & $1403.07_{\textcolor{scoregreen}{+516.39}}$ & $73.7\%_{\textcolor{scoregreen}{+54.3\%}}$
    & $82.32_{\textcolor{scoregreen}{+1.42}}$ \\
    
    GRPO              
    & $32.75_{\textcolor{scoregreen}{+2.07}}$ & $52.32_{\textcolor{scoregreen}{+2.86}}$
    & $1313.82_{\textcolor{scoregreen}{+427.14}}$ & $67.1\%_{\textcolor{scoregreen}{+47.7\%}}$
    & $80.13_{\textcolor{red}{-0.77}}$ \\
    FlowRL            
    & $37.43_{\textcolor{scoregreen}{+6.75}}$ & $56.27_{\textcolor{scoregreen}{+6.81}}$
    & $1549.47_{\textcolor{scoregreen}{+662.79}}$ & $83.3\%_{\textcolor{scoregreen}{+63.9\%}}$
    & $83.28_{\textcolor{scoregreen}{+2.38}}$ \\
    \midrule
    \method            
    & $\mathbf{38.62}_{\textcolor{scoregreen}{+7.94}}$ & $\mathbf{58.06}_{\textcolor{scoregreen}{+8.60}}$
    & $\mathbf{1646.21}_{\textcolor{scoregreen}{+759.73}}$ & $\mathbf{88.0\%}_{\textcolor{scoregreen}{+68.6\%}}$
    & $\mathbf{84.93}_{\textcolor{scoregreen}{+4.03}}$ \\
    \bottomrule[1.25pt]
  \end{tabular}}
  \vspace{-2pt}
\end{table*}
\begin{table*}[!h]
  \centering
  \vspace{-6pt}
  \caption{\textbf{ASR@1 $\uparrow$ on \textit{AdvBench} (LLM Classifier) and \textit{HarmBench} (HarmBench Classifier), dense models.} FlowRL fails to converge. Full results in Appendix~\ref{app:sema}.}
  \vspace{-5pt}
  \label{tab:sema_short}
  \resizebox{\textwidth}{!}{
  \begin{tabular}{l|llll|llll}
    \toprule[1.25pt]
    \textbf{Attackers / Victim Models}
    & \multicolumn{4}{c|}{\textbf{AdvBench} \citep{zou2023universal}}
    & \multicolumn{4}{c}{\textbf{HarmBench} \citep{mazeika2024harmbench}} \\
    \cmidrule(lr){2-5} \cmidrule(lr){6-9}
    & Qwen2.5-3B & Llama-3.1-8B & GPT-4.1-mini & Avg 
    & Qwen2.5-3B & Llama-3.1-8B & GPT-4.1-mini & Avg \\
    \midrule[1.1pt]
    X-Teaming \citep{rahman2025xteamingmultiturnjailbreaksdefenses}
    & 39.4 & 24.2 & 44.2 & 36.0
    & 45.3 & 22.0 & 44.7 & 37.3 \\
    SEMA \citep{feng2026sema}
    & 79.9 & 77.2 & 83.3 & 80.1
    & 74.5 & 70.6 & 79.8 & 75.0 \\
    \midrule
    \method & \textbf{80.2} & \textbf{81.2} & \textbf{86.1} & \textbf{82.5} & \textbf{79.9} & \textbf{73.0} & \textbf{85.5} & \textbf{79.5} \\
    \bottomrule[1.25pt]
  \end{tabular}}
  \vspace{-20pt}
\end{table*}

\begin{figure}[h]
    \centering
    \includegraphics[width=0.98\linewidth]{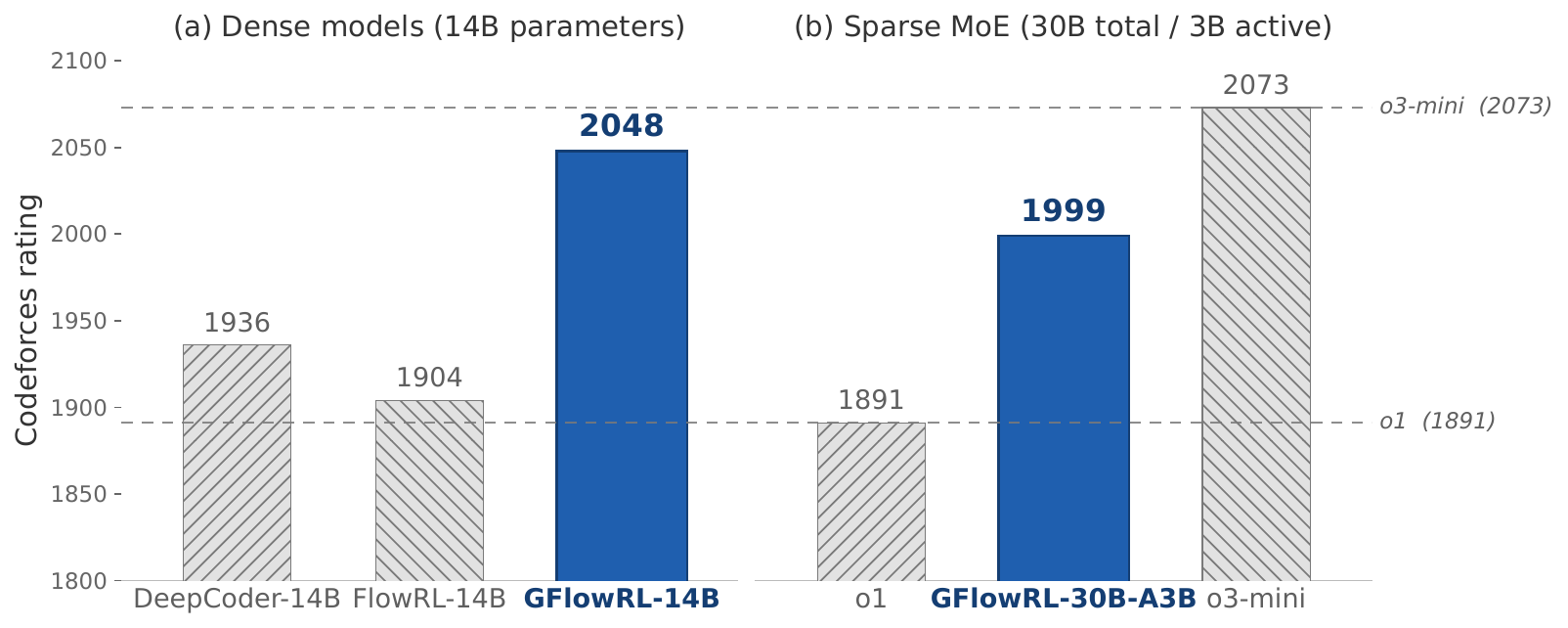}
    \vspace{-5pt}
    \caption{Codeforces rating comparison across Dense and Sparse MoE model families.}
    \label{fig:cf-comparison}
\vspace{-2pt}
\end{figure}
We evaluate \method on dense backbones across mathematical reasoning (Table~\ref{tab:math_7bresults}), competitive programming and code generation (Table~\ref{tab:code_results}, Figure~\ref{fig:cf-comparison}), and adversarial red-teaming (Table~\ref{tab:sema_short}), comparing against reward-maximizing baselines (R++, PPO, GRPO) and FlowRL. The diagnostic predicts two things: replacing $Z_\phi$ with the in-batch estimate should not hurt accuracy where FlowRL converges, and should restore convergence where it does not. Results at 32B follow the same trends and are reported in Appendix~\ref{app:math_32b}.

\method and FlowRL substantially improve over PPO and GRPO on both math and code. On 7B math, \method exceeds GRPO by $+8.44$ average points (40.92 vs.\ 32.48); on Codeforces, both GFlowNet-style methods open a 200--330 Elo advantage over GRPO. On every benchmark where both converge, \method also exceeds FlowRL: best 7B math average (\textit{40.92}, $+5.29$), highest LiveCodeBench Avg@16 (\textit{38.62}, $+1.19$), highest Codeforces rating (\textit{1646}, $+97$) and percentile (\textit{88.0\%}, $+4.7\%$), and highest HumanEval+ score (\textit{84.93}, $+1.65$). Combined with the diagnostic in Sec.~\ref{subsec:diagnostic}, this confirms that \method delivers the benefits of TB with an efficient and stable estimator.

\textit{\method succeeds where FlowRL fails to converge.}
The strongest evidence for the streamlined design comes from adversarial red-teaming, where the reward signal is noisier and sparser than in math or code. On AdvBench and HarmBench (Table~\ref{tab:sema_short}), FlowRL fails to converge and produces no usable attacker, while \method trains stably and attains the highest average ASR@1 on both benchmarks (\textit{82.5\%} on AdvBench, \textit{79.5\%} on HarmBench), surpassing the SOTA multi-turn attacker SEMA~\citep{feng2026sema} by $+2.4$ and $+4.5$ points respectively. Removing the high-variance $\log Z_\phi$ term restores stable optimization, 
turning parity on math/code into a strict capability advantage: \method covers the union of tasks where either GFlowNet-style or GRPO/PPO training works.

\textit{Comparison with frontier reasoning models.}
Figure~\ref{fig:cf-comparison}a compares \method's 14B Codeforces rating against open-source dense baselines and proprietary reasoning systems. \method reaches \textit{2048} Elo, exceeding DeepCoder-14B by $+112$ and FlowRL-14B by $+144$ to set a new open-source state of the art at this scale; DeepCoder, a strong GRPO recipe with targeted improvements that previously held this bar, makes the margin notable. \method also outperforms OpenAI's o1 by $+157$ (1891) and closes to within $25$ of o3-mini (2073)\footnote{\url{https://openai.com/index/introducing-o3-and-o4-mini/}}, a regime previously reached only by frontier closed-source systems. Detailed settings are reported in Appendix~\ref{tab:training-configs-deepcoder}.
\begin{table*}[!h]
  \centering
  \vspace{-2pt}
  \caption{\textbf{Results on math reasoning benchmarks for sparse models.} We report Avg@16 accuracy with relative improvements shown as subscripts. Positive gains are shown in \textcolor{scoregreen}{green} and negative changes in \textcolor{red}{red}. \method outperforms or matches for all baselines across 30B-A3B, and 235B-A22B  model scales. FlowRL fails to converge in both cases.}
\label{tab:math_sparse_results}
\vspace{-5pt}
  \resizebox{\textwidth}{!}{ 
  \begin{tabular}{l|llllll|l}
      \toprule[1.25pt]
        \textbf{Models} 
        & \textbf{AIME24} 
        & \textbf{AIME25}
        & \textbf{AMC23}
        & \textbf{MATH500}
        & \textbf{Minerva}
        & \textbf{Olympiad}
        & \textbf{Avg} \\
      \midrule[1.1pt]
      \midrule
      \rowcolor[rgb]{0.93,0.93,0.93}
      \multicolumn{8}{c}{\texttt{Qwen3-30B-A3B, Max Response Len = 32K tokens}} \\
      \textcolor{gray}{\texttt{Backbone}} & \textcolor{gray}{80.83} & \textcolor{gray}{68.33} & \textcolor{gray}{92.50} & \textcolor{gray}{88.50} & \textcolor{gray}{43.41} & \textcolor{gray}{73.57} & \textcolor{gray}{74.52} \\
      GRPO & $80.20_{\textcolor{red}{-0.63}}$ & $71.00_{\textcolor{scoregreen}{+2.67}}$ & $95.63_{\textcolor{scoregreen}{+3.13}}$ & $89.18_{\textcolor{scoregreen}{+0.68}}$ & $42.53_{\textcolor{red}{-0.88}}$ & $76.15_{\textcolor{scoregreen}{+2.58}}$ & 75.78 \\
      \midrule
      \method & $\textbf{81.88}_{\textcolor{scoregreen}{+1.05}}$ & $\textbf{71.04}_{\textcolor{scoregreen}{+2.71}}$ & $\textbf{97.97}_{\textcolor{scoregreen}{+5.47}}$ & $\textbf{93.29}_{\textcolor{scoregreen}{+4.79}}$ & $\textbf{45.59}_{\textcolor{scoregreen}{+2.18}}$ & $\textbf{80.14}_{\textcolor{scoregreen}{+6.57}}$ & \textbf{78.32} \\
      
      \midrule
      \multicolumn{8}{c}{\texttt{Qwen3-235B-A22B, Max Response Len = 32K tokens}} \\
      \textcolor{gray}{\texttt{Backbone}} & \textcolor{gray}{84.58} & \textcolor{gray}{79.17} & \textcolor{gray}{95.78} & \textcolor{gray}{95.67} & \textcolor{gray}{49.86} & \textcolor{gray}{82.70} & \textcolor{gray}{81.29} \\  
      GRPO & $84.80_{\textcolor{scoregreen}{+0.22}}$ & $80.20_{\textcolor{scoregreen}{+1.03}}$ & $98.75_{\textcolor{scoregreen}{+2.97}}$ & $96.33_{\textcolor{scoregreen}{+0.66}}$ & $50.48_{\textcolor{scoregreen}{+0.62}}$ & $83.84_{\textcolor{scoregreen}{+1.14}}$ & 82.40 \\
      \method & $\textbf{85.21}_{\textcolor{scoregreen}{+0.63}}$ & $\textbf{80.63}_{\textcolor{scoregreen}{+1.46}}$ & $\textbf{99.69}_{\textcolor{scoregreen}{+3.91}}$ & $\textbf{98.36}_{\textcolor{scoregreen}{+2.69}}$ & $\textbf{51.13}_{\textcolor{scoregreen}{+1.27}}$ & $\textbf{85.06}_{\textcolor{scoregreen}{+2.36}}$ & \textbf{83.35} \\    
    \bottomrule[1.25pt]
  \end{tabular}}
  \vspace{-10pt}
\end{table*}

\noindent\textbf{MoE Models}
MoE backbones amplify the instability identified in Sec.~\ref{subsec:diagnostic}: nondeterministic routing inflates gradient variance, which combined with the high-variance $\log Z_\phi$ term breaks optimization. To our knowledge, no prior GFlowNet-style RL method has trained stably on MoE at reasoning scale.
We evaluate \method on Qwen3-30B-A3B across code (Figure~\ref{fig:cf-comparison}b) and math (Table~\ref{tab:math_sparse_results}), and further scale to Qwen3-235B-A22B on math as a frontier-capacity stress test.

\textit{\method trains stably on MoE.} On code, \method-30B-A3B (from Qwen3-30B-A3B) reaches \textit{1999} Elo with only 3B active parameters, exceeding OpenAI's o1 by $+108$ points and closing to within $74$ of o3-mini. On math, \method improves over the backbone by $+3.80$ average points (\textit{78.32} vs.\ 74.52) and over GRPO by $+2.54$ (vs.\ 75.78), with gains on every benchmark. FlowRL diverges across all benchmarks on both backbones and produces no usable checkpoints, which we attribute to the instability of its auxiliary partition-function network under sparse routing; \method's streamlined estimator sidesteps this entirely.

\textit{Scaling MoE.} A central question for any RL recipe is whether it survives frontier scale, where instabilities tolerable at 30B become catastrophic and each failed run carries substantial compute cost. We apply \method to Qwen3-235B-A22B without modification, reusing the 30B hyperparameters, data, and schedule. Despite the $7.8\times$ increase in parameters, training remains stable end-to-end: \method improves over the backbone by $+2.06$ average points (\textit{83.35} vs.\ 81.29) and over GRPO by $+0.95$ (vs.\ 82.40), with gains on every benchmark. 
Notably, due to GPU constraints \method is trained for only 30 steps vs. 100 for GRPO, achieving this margin with roughly $3\times$ fewer steps; training had not yet plateaued when stopped. FlowRL again fails to converge. To our knowledge, this is the first GFlowNet-style RL method trained stably on a 235B MoE, and the fact that the recipe transfers suggests the streamlined estimator removes a fundamental scaling bottleneck.
\vspace{-6pt}
\subsection{Ablation Study}
\label{subsec:astudy}
\vspace{-6pt}
We ablate the key design choices in \method to identify the source of its gains. All ablations use Qwen2.5-7B on math benchmarks, trained for 100 steps.
\begin{table*}[!h]
\centering
\vspace{-8pt}
\caption{\textbf{Ablation studies.} Left: effect of $\beta$. Right: diversity scores across RL algorithms.}
\label{tab:ablation_combined}
\vspace{-5pt}
\begin{subtable}[t]{0.48\textwidth}
\centering
\caption{Effect of $\beta$ on math reasoning.}
\label{tab:beta}
\small
\begin{tabular}{l|ccccc}
\toprule[1.25pt]
\textbf{$\beta$} & 1 & 5 & 8 & 10 & 15 \\
\midrule[1.1pt]
\method & 37.9 & 37.2 & \textbf{38.2} & 37.7 & 36.4 \\
\midrule
\end{tabular}
\end{subtable}
\hfill
\begin{subtable}[t]{0.48\textwidth}
\centering
\caption{Diversity score across RL algorithms.}
\label{tab:diversity}
\small
\setlength{\tabcolsep}{4pt}
\begin{tabular}{l|cccc}
\toprule[1.25pt]
\textbf{Models} & GRPO & PPO & FlowRL & \method \\
\midrule[1.1pt]
Diversity & 1.21 & 1.15 & 2.64 & \textbf{3.93} \\
\midrule
\end{tabular}
\end{subtable}

\vspace{-12pt}
\end{table*}

\noindent\textbf{Effect of $\beta$ in Eq~\ref{eq:batch_logZ}.}
The inverse temperature $\beta$ controls the sharpness of the target distribution. Table~\ref{tab:beta} reports average accuracy across six math benchmarks for $\beta \in \{1, 5, 8, 10, 15\}$: accuracy follows an inverted-U trend peaking at $\mathbf{38.2}$ at $\beta = 8$, and stays within 0.5 points of this optimum for all $\beta \in [1, 10]$, indicating low sensitivity. We use $\beta = 8$ by default.

\begin{table*}[!h]
  \centering
  \small
\vspace{-6pt}
  \caption{\textbf{Effect of flow-gap clipping.} We compare the full \method against an unclipped variant.}
  \label{tab:flow_gap}
  \vspace{-5pt}
  \begin{subtable}[t]{0.4\textwidth}
    \centering
    \caption{Average accuracy on 6 math benchmarks.}
    \label{tab:flow_gap_acc}
    {
    \begin{tabular}{l|l}
      \toprule[1.25pt]
      \textbf{Models} & \textbf{Average} \\
      \midrule[1.1pt]
      \method         &  40.92\\
      \method-no-clip &  37.02\\
      \bottomrule[1.25pt]
    \end{tabular}}
  \end{subtable}
  \hfill
  \begin{subtable}[t]{0.58\textwidth}
    \centering
    \caption{Gradient norm statistics.}
    \label{tab:flow_gap_grad}
    {
    \begin{tabular}{l|lll}
      \toprule[1.25pt]
      \textbf{Models} & \textbf{Mean} & \textbf{Max} & \textbf{Std} \\
      \midrule[1.1pt]
      \method         &0.095  & 6.184 &  0.4301 \\
      \method-no-clip &0.601 & 16.57 &  1.3785\\
      \bottomrule[1.25pt]
    \end{tabular}}
  \end{subtable}
  \vspace{-12pt}
\end{table*}

\noindent\textbf{Flow-gap clipping.} We ablate flow-gap clipping by comparing \method against an unclipped variant. Removing clipping drops average accuracy by 3.9 points (Table~\ref{tab:flow_gap_acc}) and inflates gradient norms substantially, with the mean rising $6.3\times$ and the maximum nearly tripling (Table~\ref{tab:flow_gap_grad}). These unstable gradients impair optimization, confirming that clipping is essential for stable training.

\noindent\textbf{Diversity analysis.}
Table~\ref{tab:diversity} reports diversity scores from GPT-o4-mini on a 1--5 scale following~\citep{yu2025flow, zhu2026flowrl}, averaged over five runs (variance $\approx 0.53$; prompt in Figure~\ref{fig:diversity-prompt} in Appendix~\ref{app:divs}). GRPO ($1.21$) and PPO ($1.15$) fall in the ``minimal diversity'' band, consistent with reward-maximizing policy gradients. FlowRL more than doubles this to $2.64$ by matching the reward distribution. GFlowRL further improves to $3.93$ (``high diversity''), $3.2\times$ GRPO and $1.6\times$ FlowRL; the $2.7$-point gap over PPO/GRPO is $5\times$ judge variance, well outside measurement noise.

\vspace{-5pt}
\section{Related Work}
\label{sec:related}
\vspace{-5pt}
\noindent\textbf{RL for LLM reasoning.} RLHF established post-training with PPO-style optimization and a reference-policy regularizer~\citep{ouyang2022training,schulman2017proximal}. Recent reasoning systems shift to verifiable rewards on math and code, enabling critic-free or group-based optimization at scale (GRPO, REINFORCE++, DAPO, DeepSeek-R1, Kimi k1.5, DeepCoder)~\citep{shao2024deepseekmath,hu2025reinforcepp,yu2025dapo,guo2025deepseek,team2025kimi,deepcoder2025}. These methods maximize expected reward and tend to concentrate mass on a small set of dominant trajectories.

\noindent\textbf{GFlowNets and distribution matching.} GFlowNets sample compositional objects in proportion to reward rather than greedily maximizing it~\citep{bengio2021flow,bengio2023gflownet,malkin2022tb}. The closest prior work for LLM post-training is FOR and FlowRL~\citep{yu2025flow,zhu2026flowrl}, both of which instantiate the trajectory balance objective with a learned prompt-conditional partition network. \method stays in this distribution-matching line but replaces that auxiliary network with a rollout-based estimate, motivated by its instability when jointly trained with a frontier-scale policy.

\noindent\textbf{In-batch estimation and variance reduction.} Our estimator connects to a long line of variance-reduced score-function estimators, from REINFORCE baselines~\citep{williams1992simple} to leave-one-out and control-variate methods~\citep{richter2020vargrad,Grathwohl2018BackpropagationTT}, and in LLM RL to group-relative normalization in GRPO and REINFORCE++~\citep{shao2024deepseekmath,hu2025reinforcepp}. We use the rollout group to estimate the partition term implied by the distribution-matching objective and inject it as a stop-gradient baseline, structurally analogous to a GRPO group-mean baseline but applied to the trajectory-balance residual rather than the advantage. Extended discussion in Appendix~\ref{app:related}.
\vspace{-6pt}
\section{Conclusion}
\label{sec:conclusion}
\vspace{-6pt}
Distribution-matching RL is appealing for LLM reasoning because it preserves multiple high-reward solution paths, but our results suggest the standard GFlowNet recipe does not transfer cleanly to modern post-training. The bottleneck is not the trajectory-balance objective itself but the learned partition function: in the short-horizon regime of LLM RL, a randomly initialized $Z_\phi$ must be learned from scratch alongside a large pretrained policy that only needs modest refinement, turning the partition term into a source of noise, gradient instability, and systems overhead. We proposed \method, which replaces the learned partition network with an in-batch Monte Carlo estimate from the rollout group already required for training, combined with importance sampling and flow-gap clipping for stability. The result is a simpler algorithm with a GRPO-compatible pipeline that retains the reward-distribution-matching perspective: it matches or improves on FlowRL where FlowRL converges, restores convergence in noisier settings such as red-teaming, and scales cleanly to large MoE backbones. The broader takeaway is that scaling GFlowNet-style RL to large reasoning models may depend less on adding auxiliary machinery and more on identifying which parts of the original objective are unnecessary in this regime.
\section{Acknowledgment}
\label{sec:ack}
We thank Ed Nightingale for sharing GPU resources during the early stages of this work; Lili Chen for her support and valuable feedback; GCR and Lifeng Li for their assistance with the GPU cluster; Hao Tang for his insightful comments on the paper; and Mingqian Feng for helping with the SEMA experiments.
\bibliography{main}
\bibliographystyle{plainnat}

\appendix
\newpage
\section{Limitations and Broader Impacts}
\label{app:limits}
\method's in-batch Monte Carlo estimate of $\log Z$ can have higher variance in principle, especially when the group size is small. In practice, our two stabilizers, importance-sampling correction and flow-gap clipping, help control this variance, and the resulting streamlined estimator consistently outperforms the learned alternative across our evaluated settings. Nevertheless, a more precise characterization of the bias-variance tradeoff, as well as lower-variance group-based estimators, remains an important direction for future work. Another limitation is generality. Our experiments cover math reasoning, code reasoning, and red-teaming for language models, but it remains unclear whether the same estimator-centric design principle extends to broader agentic or multimodal RL settings. We leave this broader validation to future work.

\section{Theoretical Justification for \method}
\label{app:proof}
We provide formal proofs of the two theoretical claims made in Section~\ref{sec:method} that the \method loss recovers the reward-proportional target distribution at its fixed point (Proposition~\ref{prop:optimality}). The first result is stated for the loss without length normalization and in a neighborhood where the flow-gap clip is inactive; Remark~\ref{rem:length_norm} discusses the effect of length normalization, and Remark~\ref{rem:clip} discusses the clip.

\begin{proposition}[Fixed-point optimality of \method]
\label{prop:optimality}
Consider the population form of the \method loss in Eq.~\eqref{eq:gflowrl_loss} \emph{without} length normalization and with the flow-gap clip inactive (i.e.\ $\tilde{g}=g$),
\begin{equation}
\mathcal{L}_{\text{\method}}(\theta;\mathbf{x})
\;=\; \mathbb{E}_{\mathbf{y}\sim\pi_{\phi_{\text{old}}}(\cdot\mid\mathbf{x})}\!\left[
w(\mathbf{y})\!\left(\operatorname{sg}\!\left[\mathcal{Z}_t(\mathbf{x})\right]
\;+\; \log\frac{\pi_\theta(\mathbf{y}\mid\mathbf{x})}{\pi_{\theta_{\text{ref}}}(\mathbf{y}\mid\mathbf{x})}
\;-\; {\beta}{r(\mathbf{x},\mathbf{y})}\right)^{\!2}\right],
\label{eq:pop_loss}
\end{equation}


where $w(\mathbf{y}) = \min\!\big(\pi_\theta(\mathbf{y}\mid\mathbf{x})/\pi_{\phi_{\text{old}}}(\mathbf{y}\mid\mathbf{x}),\,1+\epsilon\big)>0$ and
\begin{equation}
\mathcal{Z}_t(\mathbf{x}) \;=\; \mathbb{E}_{\mathbf{y}\sim\pi_{\phi_{\text{old}}}(\cdot\mid\mathbf{x})}\!\left[
{\beta}{r(\mathbf{x},\mathbf{y})} \;+\; \log\pi_{\theta_{\text{ref}}}(\mathbf{y}\mid\mathbf{x}) \;-\; \log\pi_{\phi_{\text{old}}}(\mathbf{y}\mid\mathbf{x})\right]
\label{eq:pop_estimator}
\end{equation}
is the population analogue of the in-batch estimator in Eq.~\eqref{eq:batch_logZ}. Assume the policy class is expressive enough to contain the target $p^*(\mathbf{y}\mid\mathbf{x}) \propto \pi_{\theta_{\text{ref}}}(\mathbf{y}\mid\mathbf{x})\exp(\beta r(\mathbf{x},\mathbf{y}))$. Then at any self-consistent fixed point $\theta^*$ with $\pi_{\theta^*} = \pi_{\phi_{\text{old}}}$ that achieves $\mathcal{L}_{\text{\method}}(\theta^*;\mathbf{x}) = 0$,
\begin{equation*}
\pi_{\theta^*}(\mathbf{y}\mid\mathbf{x}) \;=\; \frac{\pi_{\theta_{\text{ref}}}(\mathbf{y}\mid\mathbf{x})\exp\!\big(\beta r(\mathbf{x},\mathbf{y})\big)}{Z(\mathbf{x})} \;=\; p^*(\mathbf{y}\mid\mathbf{x}),\;
Z(\mathbf{x}) \;=\; \sum_{\mathbf{y}'}\pi_{\theta_{\text{ref}}}(\mathbf{y}'\mid\mathbf{x})\exp\!\big(\beta r(\mathbf{x},\mathbf{y}')\big),
\end{equation*}
and the estimator $\mathcal{Z}_t(\mathbf{x})$ coincides with the true $\log Z(\mathbf{x})$.
\end{proposition}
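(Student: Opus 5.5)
The plan is to use the zero-loss condition to force the residual to vanish pointwise on the support of $\pi_{\theta^*}$. Then the normalization of $\pi_{\theta^*}$ identifies the constant. At the self-consistent fixed point $\pi_{\theta^*}=\pi_{\phi_{\text{old}}}$, the importance weight is $w(\mathbf{y})=\min(1,1+\epsilon)=1>0$ for every $\mathbf{y}$ in the support. So $\mathcal{L}_{\text{\method}}(\theta^*;\mathbf{x})=0$ is the expectation of a nonnegative quantity under $\pi_{\theta^*}$. It therefore forces
\begin{equation*}
\log \pi_{\theta^*}(\mathbf{y}\mid\mathbf{x}) \;=\; \log\pi_{\theta_{\text{ref}}}(\mathbf{y}\mid\mathbf{x}) + \beta r(\mathbf{x},\mathbf{y}) - c, \qquad c \coloneqq \mathcal{Z}_t(\mathbf{x}),
\end{equation*}
for every $\mathbf{y}$ with $\pi_{\theta^*}(\mathbf{y}\mid\mathbf{x})>0$. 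Here $c$ is a prompt-dependent constant, since the stop-gradient value does not depend on $\mathbf{y}$. This is the step where the self-consistency assumption is used, because it makes the weight trivially positive and the sampling measure equal to $\pi_{\theta^*}$.

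Next I exponentiate and sum over the support $S$ of $\pi_{\theta^*}$. This gives $e^{c}=\sum_{\mathbf{y}\in S}\pi_{\theta_{\text{ref}}}(\mathbf{y}\mid\mathbf{x})e^{\beta r(\mathbf{x},\mathbf{y})}$. Thus $\pi_{\theta^*}$ is the target tilted distribution restricted to $S$ and renormalized.

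I expect the main obstacle to be showing that $S$ is the full support of $\pi_{\theta_{\text{ref}}}$, so that $e^{c}$ is exactly $Z(\mathbf{x})$. The zero-loss condition only constrains points that $\pi_{\theta^*}$ actually samples, so a policy concentrated on a strict subset would also have zero loss. I would first appeal to the expressivity assumption together with the standard autoregressive parameterization, under which softmax policies assign strictly positive probability to every finite sequence. That gives full support, so $S$ contains the support of $\pi_{\theta_{\text{ref}}}$. If this is judged insufficient, I would state the support condition explicitly as part of the fixed-point notion: the fixed point is reached from an initialization with full support and retains it. I would note that with $S$ full, the sum over $S$ equals $Z(\mathbf{x})$ by definition. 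This yields $\pi_{\theta^*}=p^*$.

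Finally, for the estimator claim, I substitute $\pi_{\phi_{\text{old}}}=\pi_{\theta^*}=p^*$ into Eq.~\eqref{eq:pop_estimator}. The integrand $\beta r+\log\pi_{\theta_{\text{ref}}}-\log p^*$ equals $\log Z(\mathbf{x})$ identically on the support, so its expectation is $\log Z(\mathbf{x})$. This is consistent with $c=\log Z(\mathbf{x})$ obtained above and completes the proof.
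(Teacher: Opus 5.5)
Your proposal is correct and follows the paper's proof essentially step for step. Zero loss forces the residual to vanish on the support of $\pi_{\theta^*}$, since the weight is positive (indeed equal to $1$ at the fixed point); exponentiating and normalizing then identifies $\mathcal{Z}_t(\mathbf{x})$, and substituting back into the population estimator confirms self-consistency. Your explicit treatment of the support is more careful than the paper's: its Step~2 sums over all $\mathbf{y}$ although $R(\mathbf{y})=0$ was only established on the support of $\pi_{\theta^*}$, and, as you note, without a full-support condition the renormalized restriction of $p^*$ to any subset is also a zero-loss self-consistent fixed point, so your added assumption is genuinely needed.
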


\begin{proof}
The argument has three steps: (i) reduce the residual at a fixed point, (ii) derive the form of $\pi_{\theta^*}$, and (iii) verify self-consistency of the estimator.

\medskip\noindent\textbf{Step 1: Residual at a fixed point.}\quad
At a fixed point where $\pi_{\theta^*}=\pi_{\phi_{\text{old}}}$, the log-ratio term $\log(\pi_\theta/\pi_{\theta_{\text{ref}}})$ in Eq.~\eqref{eq:pop_loss} evaluates to $\log(\pi_{\theta^*}/\pi_{\theta_{\text{ref}}})$, so the residual becomes
\begin{equation}
R(\mathbf{y}) \;\coloneqq\; \mathcal{Z}_t(\mathbf{x}) \;+\; \log\frac{\pi_{\theta^*}(\mathbf{y}\mid\mathbf{x})}{\pi_{\theta_{\text{ref}}}(\mathbf{y}\mid\mathbf{x})} \;-\; \beta {r(\mathbf{x},\mathbf{y})}.
\label{eq:residual}
\end{equation}
Because $w(\mathbf{y})>0$, Eq.~\eqref{eq:pop_loss} is a weighted second moment of $R$ under $\pi_{\phi_{\text{old}}}$. It equals zero iff $R(\mathbf{y})=0$ holds $\pi_{\phi_{\text{old}}}$-almost-surely, i.e.\ for every $\mathbf{y}$ in the support of $\pi_{\theta^*}$.

\medskip\noindent\textbf{Step 2: Form of $\pi_{\theta^*}$.}\quad
Setting $R(\mathbf{y})=0$ and rearranging,
\begin{equation}
\log\pi_{\theta^*}(\mathbf{y}\mid\mathbf{x}) \;=\; \log\pi_{\theta_{\text{ref}}}(\mathbf{y}\mid\mathbf{x}) \;+\; {\beta} {r(\mathbf{x},\mathbf{y})} \;-\; \mathcal{Z}_t(\mathbf{x}).
\label{eq:log_policy}
\end{equation}
 Note that $\mathcal{Z}_t(\mathbf{x})$ is a scalar depending only on $\mathbf{x}$ (the stop-gradient operator treats it as a constant w.r.t.\ $\theta$). Exponentiating,
\begin{equation}
\pi_{\theta^*}(\mathbf{y}\mid\mathbf{x}) \;=\; \pi_{\theta_{\text{ref}}}(\mathbf{y}\mid\mathbf{x})\exp\!\big(\beta r(\mathbf{x},\mathbf{y})\big)\cdot e^{-\mathcal{Z}_t(\mathbf{x})}.
\label{eq:pi_form}
\end{equation}
Imposing $\sum_{\mathbf{y}}\pi_{\theta^*}(\mathbf{y}\mid\mathbf{x})=1$ uniquely determines $\mathcal{Z}_t(\mathbf{x})$:
\begin{equation}
e^{\mathcal{Z}_t(\mathbf{x})} \;=\; \sum_{\mathbf{y}}\pi_{\theta_{\text{ref}}}(\mathbf{y}\mid\mathbf{x})\exp\!\big(\beta r(\mathbf{x},\mathbf{y})\big) \;=\; Z(\mathbf{x}),
\qquad\text{hence}\qquad\mathcal{Z}_t(\mathbf{x}) \;=\; \log Z(\mathbf{x}).
\label{eq:C_eq_logZ}
\end{equation}
Substituting back into Eq.~\eqref{eq:pi_form} gives $\pi_{\theta^*}(\mathbf{y}\mid\mathbf{x}) = \pi_{\theta_{\text{ref}}}(\mathbf{y}\mid\mathbf{x})\exp(\beta r(\mathbf{x},\mathbf{y}))/Z(\mathbf{x}) = p^*(\mathbf{y}\mid\mathbf{x})$.

\medskip\noindent\textbf{Step 3: Self-consistency of the estimator.}\quad
Step~2 forced $\log Z(\mathbf{x})$ by the normalization of $\pi_{\theta^*}$. It remains to verify that the \emph{definition} of $\mathcal{Z}_t(\mathbf{x})$ in Eq.~\eqref{eq:pop_estimator} is consistent with this value, i.e.\ that the self-consistent fixed point exists. Plugging $\pi_{\phi_{\text{old}}} = \pi_{\theta^*}$ and Eq.~\eqref{eq:log_policy}into Eq.~\eqref{eq:pop_estimator}:
\begin{align*}
\mathcal{Z}_t(\mathbf{x})
&\;=\; \mathbb{E}_{\mathbf{y}\sim\pi_{\theta^*}}\!\left[\beta {r(\mathbf{x},\mathbf{y})} + \log\pi_{\theta_{\text{ref}}}(\mathbf{y}\mid\mathbf{x}) - \log\pi_{\theta^*}(\mathbf{y}\mid\mathbf{x})\right] \\
&\;=\; \mathbb{E}_{\mathbf{y}\sim\pi_{\theta^*}}\!\left[\beta {r(\mathbf{x},\mathbf{y})} + \log\pi_{\theta_{\text{ref}}}(\mathbf{y}\mid\mathbf{x}) - \Big(\log\pi_{\theta_{\text{ref}}}(\mathbf{y}\mid\mathbf{x}) + {\beta}{r(\mathbf{x},\mathbf{y})} - \log Z(\mathbf{x})\Big)\right] \\
&\;=\; \log Z(\mathbf{x}),
\end{align*}
matching Eq.~\eqref{eq:C_eq_logZ}. Thus the pair $(\pi_{\theta^*},\mathcal{Z}_t(\mathbf{x}))=(p^*,\log Z)$ is a self-consistent fixed point of the joint system (policy update, estimator update), completing the proof. \qed
\end{proof}

\begin{remark}[Finite-batch estimator]
\label{rem:finite_batch}
The in-batch estimator in Eq.~\eqref{eq:batch_logZ} is an unbiased Monte Carlo estimate of Eq.~\eqref{eq:pop_estimator} with variance $O(1/G)$.
\end{remark}

\begin{remark}[Inactivity of the flow-gap clip at the fixed point]
\label{rem:clip}
Proposition~\ref{prop:optimality} is stated for the unclipped loss, but the argument extends to the clipped loss in Eq.~\eqref{eq:gflowrl_loss}. At the fixed point $\pi_{\theta^*}=\pi_{\phi_{\text{old}}}$, the flow gap defined in Eq.~\eqref{eq:flow_gap} reduces to the residual in Eq.~\eqref{eq:residual} (without length normalization), which equals zero by the fixed-point condition. Hence $g^{(i)}=0$ for every rollout at the optimum, the asymmetric clip $[-\epsilon_{\text{low}}, +\epsilon_{\text{high}}]$ is inactive, and $\tilde{g}^{(i)}=g^{(i)}$. Proposition~\ref{prop:optimality} therefore characterizes a local fixed point of the clipped loss as well: the clip acts as a trust-region safeguard on outlier gradients but does not shift the stationary distribution. Away from the fixed point the clip may bias individual updates, but this is the intended effect — it bounds the influence of outlier rollouts with $|g^{(i)}|$ larger than the trust-region radius.
\end{remark}

\begin{remark}[Effect of length normalization]
\label{rem:length_norm}
The \method loss in Eq.~\eqref{eq:gflowrl_loss} normalizes $\log(\pi_\theta/\pi_{\phi_{\text{old}}})$ by the response length $|\mathbf{y}|$, and the corresponding flow-gap residual (Eqs.~\ref{eq:residual_main}--\ref{eq:flow_gap}) normalizes $\log(\pi_{\phi_{\text{old}}}/\pi_{\theta_{\text{ref}}})$ likewise. Repeating the fixed-point argument of Step~1, with the clip inactive, the residual at $\pi_{\theta^*}=\pi_{\phi_{\text{old}}}$ becomes
\begin{equation*}
\mathcal{Z}_t(\mathbf{x}) \;+\; \frac{1}{|\mathbf{y}|}\log\frac{\pi_{\theta^*}(\mathbf{y}\mid\mathbf{x})}{\pi_{\theta_{\text{ref}}}(\mathbf{y}\mid\mathbf{x})} \;-\; \beta {r(\mathbf{x},\mathbf{y})} \;=\; 0,
\end{equation*}
so
\begin{equation*}
\pi_{\theta^*}(\mathbf{y}\mid\mathbf{x}) \;=\; \pi_{\theta_{\text{ref}}}(\mathbf{y}\mid\mathbf{x})\,\exp\!\Big(|\mathbf{y}|\cdot\big(\beta r(\mathbf{x},\mathbf{y}) - \mathcal{Z}_t(\mathbf{x})\big)\Big).
\end{equation*}
This induces a length-dependent variant of $p^*$. At a zero-residual fixed point, a sequence of length $L$ is weighted relative to the reference by $\exp\!\big(L\cdot \beta r\big)$, together with the length-dependent normalization factor $\exp(-L\mathcal{Z}_t(\mathbf{x}))$. Thus, in the idealized setting where all rollouts share a common length $|\mathbf{y}^{(i)}|=L$, the fixed point recovers the reward-tilted distribution at inverse temperature $L\beta$; equivalently, Proposition~\ref{prop:optimality} applies with $\beta\mapsto L\beta$. When response lengths vary, the term $\exp(-|\mathbf{y}|\mathcal{Z}_t(\mathbf{x}))$ is no longer a global normalization constant across sequences, so length normalization introduces a length-dependent distortion of the stationary distribution. In practice, response lengths vary only modestly within rollout groups on reasoning benchmarks, so this distortion is limited. We therefore view length normalization as an engineering trade-off: it improves optimization stability by preventing long sequences from dominating the loss, at the cost of a mild length-dependent bias in the induced fixed point.
\end{remark}

\section{Extended Related Work}
\label{app:related}

\noindent\textbf{RL for LLM post-training and reasoning.}
RLHF established sequence-level post-training for aligning language models, typically using PPO-style optimization with a reference-policy regularizer~\citep{ouyang2022training,schulman2017proximal}. More recent reasoning systems shift to verifiable rewards on math and code, which makes critic-free or group-based optimization practical at scale. Representative examples include GRPO, REINFORCE++, DAPO, DeepSeek-R1, Kimi k1.5, and DeepCoder~\citep{shao2024deepseekmath,hu2025reinforcepp,yu2025dapo,guo2025deepseek,team2025kimi,deepcoder2025}. Many of these systems are built around reward-maximizing or normalized-advantage objectives and have produced strong reasoning performance, but they do not explicitly target a reward-weighted response distribution and can therefore favor a small set of dominant trajectories.

\noindent\textbf{GFlowNets and distribution matching.}
GFlowNets were introduced as a framework for sampling compositional objects in proportion to reward rather than greedily maximizing it~\citep{bengio2021flow,bengio2023gflownet,zhang2022generative}. Subsequent work developed stronger objectives and interpretations, including trajectory balance, a variational-inference view of GFlowNets, maximum-entropy formulations, entropy-regularized RL connections, and improved training from partial episodes~\citep{malkin2022tb,malkin2022gflownetsvi,mohammadpour2024maximum,tiapkin2024generative,madan2023learning}, with applications extending to structured decision problems such as robust scheduling~\citep{robust-scheduling}. In language modeling, amortized-inference perspectives connect GFlowNet-style objectives to reasoning over latent chains of thought~\citep{hu2024amortizing}, and closely related ideas have recently been extended to diffusion-model priors in vision, language, and control~\citep{venkatraman2024amortizing}. The closest prior work for LLM post-training is FOR and FlowRL~\citep{yu2025flow,zhu2026flowrl}. Our method stays in this distribution-matching line, but departs from FlowRL by removing the learned prompt-conditional partition network and replacing it with a rollout-based estimate motivated by the instability of jointly training that auxiliary model with a frontier-scale policy. Compared to reward-maximizing methods such as GRPO, the GFlowNet objective targets a reference-anchored \textit{distribution} rather than expected reward, retaining mass on multiple high-reward reasoning paths instead of concentrating on a single mode.

\noindent\textbf{In-batch estimation and variance reduction.}
Our estimator is also related to a broader line of work on reducing variance in score-function and policy-gradient optimization. Classical REINFORCE baselines~\citep{williams1992simple,weaver2001optimal}, action-dependent baselines~\citep{wu2018variance}, and leave-one-out or learned control variates for variational inference and black-box gradient estimation~\citep{richter2020vargrad,Tucker2017REBARLU,Grathwohl2018BackpropagationTT} all reduce estimator noise without changing the underlying objective. Path-space log-variance objectives further provide low-variance training signals for sampling problems~\citep{nusken2020hjb}, and have been shown to coincide with the trajectory-balance loss for GFlowNets~\citep{malkin2022gflownetsvi}, providing additional motivation for variance-aware partition estimates in our setting. In LLM RL, GRPO and REINFORCE++ stabilize critic-free updates through group-relative normalization and global advantage normalization~\citep{shao2024deepseekmath,hu2025reinforcepp}, while DAPO emphasizes system-level stability in large-scale reasoning training~\citep{yu2025dapo}. Our use of the rollout group is related in spirit to these in-batch estimators, but the quantity we estimate is different: we use the current group to estimate the partition term implied by the distribution-matching objective and inject it as a stop-gradient baseline, structurally analogous to a GRPO group-mean baseline applied to the trajectory-balance residual rather than to the advantage.

\noindent\textbf{Reasoning and safety domains.}
Most work on RL post-training for LLMs is evaluated on domains with verifiable rewards, especially mathematical reasoning and code generation. Representative math resources include step-level verification and olympiad-style evaluation~\citep{lightman2023let,he2024olympiadbench}, while code evaluation commonly uses LiveCodeBench, Codeforces, and HumanEval+~\citep{jain2024livecodebench,penedo2025codeforces,chen2021evaluating}. We additionally study LLM red-teaming, where rewards are sparser and noisier than in standard reasoning benchmarks. This setting connects our work to jailbreak benchmarks such as AdvBench and HarmBench and to recent automated attack methods including SEMA, FITD, and X-Teaming~\citep{zou2023universal,mazeika2024harmbench,feng2026sema,weng2025footinthedoormultiturnjailbreakllms,rahman2025xteamingmultiturnjailbreaksdefenses}. We use this domain not primarily to propose a new attack method, but to test whether \method remains stable when reward feedback is substantially noisier than in math and code.

\section{Detailed Experimental Setup}
\label{app:setup}


\begin{table*}[!h]
\centering
\small
\setlength{\tabcolsep}{6pt}
\renewcommand{\arraystretch}{1.1}
\caption{\textbf{Dataset statistics.} Training and evaluation datasets used in our experiments, spanning mathematical reasoning, code generation, and AI security.}
\label{tab:data_stats}
\begin{tabular}{l|l|l|c|l}
\toprule[1.25pt]
\textbf{Domain} & \textbf{Split} & \textbf{Dataset} & \textbf{\# Examples} & \textbf{License}\\
\midrule[1.1pt]
\multirow{7}{*}{\textbf{Math}}
  & Train & DAPO-Math-17K~\citep{DapoMath}             & 17{,}398 & Apache 2.0 \\
  \cmidrule(lr){2-5}
  & \multirow{6}{*}{Test}
          & AIME 2024~\citep{AIME24}                       & 30 & Apache 2.0 \\
  &       & AIME 2025~\citep{AIME25}                       & 30 & Apache 2.0 \\
  &       & AMC 2023~\citep{AMC}                         & 40 & None\\
  &       & MATH-500~\citep{lightman2023let,math500}             & 500 & MIT\\
  &       & Minerva~\citep{minerva,minervamath}                      & 272 & MIT \\
  &       & Olympiad~\citep{he2024olympiadbench}         & 674 & MIT \\
\midrule
\multirow{4}{*}{\textbf{Code}}
  & Train & DeepCoder~\citep{deepcoder2025}              & 24{,}000 & MIT \\
  \cmidrule(lr){2-5}
  & \multirow{3}{*}{Test}
          & LiveCodeBench~\citep{jain2024livecodebench,lcbdata}  & 279  & MIT \\
  &       & Codeforces~\citep{penedo2025codeforces}      & 408 & Apache 2.0 \\
  &       & HumanEval+~\citep{chen2021evaluating}        & 163 & Apache 2.0\\
\midrule
\multirow{2}{*}{\textbf{Security}}
  & Train & AdvBench~\citep{zou2023universal}             & 418 & MIT \\
  \cmidrule(lr){2-5}
  & \multirow{2}{*}{Test}
          & AdvBench~\citep{zou2023universal}            & 520 & MIT \\
  &       & HarmBench~\citep{mazeika2024harmbench}       & 159 & MIT \\
\bottomrule[1.25pt]
\end{tabular}
\end{table*}
\begin{table}[!h]
\centering
\caption{Detailed training Hyperparameters of \method, FlowRL and GRPO.}
\label{tab:hyperparams}
\begin{tabular}{@{}ll@{}}
\toprule
\textbf{Hyperparameter} & \textbf{Value} \\
\midrule
\multicolumn{2}{@{}l}{\textit{Optimization}} \\
\quad Optimizer & AdamW \\
\quad Learning rate & $1 \times 10^{-6}$ \\
\quad LR schedule & Constant with linear warmup \\
\quad Warmup steps & 10 \\
\quad Weight decay & 0.1 \\
\quad Gradient clipping & 1.0 \\
\midrule
\multicolumn{2}{@{}l}{\textit{Policy Optimization}} \\
\quad Advantage estimator & GRPO \\
\quad Clip ratio $\varepsilon_{\text{low}}$ & 0.2 \\
\quad Clip ratio $\varepsilon_{\text{high}}$ & 0.28 \\
\quad Loss aggregation & Token-mean \\
\quad KL coefficient & 0.0 \\
\quad Entropy coefficient & 0.0 \\
\midrule
\multicolumn{2}{@{}l}{\textit{Rollout}} \\
\quad Responses per prompt $G$ & 16 \\
\quad Temperature (training) & 1.0 \\
\quad Top-$p$ (training) & 1.0 \\
\quad Temperature (evaluation) & 1.0 \\
\quad Top-$p$ (evaluation) & 0.7 \\
\quad Overlong reward shaping & Enabled \\
\quad Overlong penalty factor & 1.0 \\
\quad Importance sampling & Sequence-level \\
\quad IS threshold & 2.0 \\
\quad Ratio scaling & Geometric \\
\quad RS threshold (upper / lower) & 1.01 / 0.99 \\
\quad Filter groups & Accuracy-based \\
\bottomrule
\end{tabular}
\end{table}

\begin{table}[t]
\centering
\caption{Training configurations for math tasks.}
\label{tab:math-configs}
\small
\begin{tabular}{l c c c c c c c}
\toprule
\textbf{Model} 
& $L_{\text{ctx}}$ 
& $L_{\text{prompt}}$ 
& \textbf{GPUs} 
& \textbf{Strat.} 
& $B$ 
& $b$ 
& \textbf{Steps} \\
\midrule

Qwen2.5-7B    
& $8\text{k}$  & $2\text{k}$ & $1\times8$ & FSDP2    & 512 & 64 & $[0,300]$ \\

Qwen2.5-32B   
& $8\text{k}$  & $2\text{k}$ & $4\times8$ & FSDP2    & 512 & 64 & $[0,300]$ \\

Qwen3-30B-A3B 
& $32\text{k}$ & $2\text{k}$ & $4\times8$ & Megatron & 512 & 32 & $[0,100]$ \\

\bottomrule
\end{tabular}
\end{table}
\begin{table*}[!h]
\centering
\caption{Per-model training configurations following a two-stage schedule, as in DeepCoder~\citep{deepcoder2025}. Models are selected based on validation performance.}
\label{tab:training-configs-deepcoder}
\small
\setlength{\tabcolsep}{2pt}
\begin{tabular}{llcccccccc}
\toprule
\textbf{Model} &  \textbf{Stage} &\textbf{Context Len.}  & \textbf{Prompt Len.}
& \textbf{GPUs}
&\textbf{Train BSZ}
& \textbf{Mini BSZ}
& \textbf{Steps} \\
\midrule
\multirow{2}{*}{DeepSeek-R1-Distill-Qwen-7B}
& I  & 16k & 3k & $4\times8$ & 128 & 32 & 0--200 \\
& II & 32k & 3k & $4\times8$ & 128 & 32 & 200--400 \\

\addlinespace
\multirow{2}{*}{DeepSeek-R1-Distill-Qwen-14B}
& I  & 16k & 3k & $4\times8$ & 128 & 64 & 0--200 \\ 
& II & 32k & 3k & $4\times8$ & 128 & 64 & 200--400 \\

\addlinespace
\multirow{2}{*}{Qwen3-30B-A3B}
& I  & 16k & 2k & $4\times8$ & 128 & 32 & 0--200 \\
& II & 32k & 2k & $8\times8$ & 128 & 32 & 200--400 \\

\bottomrule
\end{tabular}
\end{table*}
\begin{table}[!h]
\centering
\caption{Training configurations for code tasks.}
\label{tab:code-configs}
\small
\begin{tabular}{l c c c c c c c}
\toprule
\textbf{Model} 
& $L_{\text{ctx}}$ 
& $L_{\text{prompt}}$ 
& \textbf{GPUs} 
& \textbf{Strat.} 
& $B$ 
& $b$ 
& \textbf{Steps} \\
\midrule

DeepSeek-R1-Distill-Qwen-7B    
& $8\text{k}$  & $2\text{k}$ & $1\times8$ & FSDP2    & 512 & 64 & $[0,300]$ \\

DeepSeek-R1-Distill-Qwen-14B   
 & $8\text{k}$  & $2\text{k}$ & $4\times8$ & FSDP2    & 512 & 64 & $[0,300]$ \\

\bottomrule
\end{tabular}
\end{table}
\begin{table}[!h]
\centering
\caption{Training configurations of MoE for math tasks.}
\label{tab:moe-configs}
\small
\begin{tabular}{l c c c c c c c}
\toprule
\textbf{Model} 
& $L_{\text{ctx}}$ 
& $L_{\text{prompt}}$ 
& \textbf{GPUs} 
& \textbf{Strat.} 
& $B$ 
& $b$ 
& \textbf{Steps} \\
\midrule

Qwen3-30B-A3B    
& $32\text{k}$  & $2\text{k}$ & $4\times8$ & Megatron    & 256 & 64 & $[0,100]$ \\

Qwen3-235B-A22B   
 & $32\text{k}$  & $2\text{k}$ & $8\times8$ & Megatron    & 128 & 32 & $[0,100]$ \\

\bottomrule
\end{tabular}
\end{table}
\subsection{Backbone Models}
For mathematical reasoning we use the dense pretrained \textit{Qwen2.5-7B} and \textit{Qwen2.5-32B}~\citep{qwen2.5}. For coding we use \textit{Deepseek-R1-Distilled-Qwen-7B} and \textit{Deepseek-R1-Distilled-Qwen-14B}~\citep{guo2025deepseek}. To assess scalability to sparse architectures, we further evaluate the mixture-of-experts models \textit{Qwen3-30B-A3B} (3B active parameters) and \textit{Qwen3-235B-A22B} (22B active parameters).

\subsection{Datasets}
Per-dataset statistics are summarized in Table~\ref{tab:data_stats}.
\begin{itemize}
    \item \textit{Math:} DAPO-Math-17K~\citep{DapoMath}, AIME2024~\citep{AIME24}, AIME2025~\citep{AIME25}, AMC2023~\citep{AMC}, MATH-500~\citep{lightman2023let}, Minerva~\citep{minerva}, and Olympiad~\citep{he2024olympiadbench}.
    \item \textit{Coding:} LiveCodeBench~\citep{jain2024livecodebench}, Codeforces~\citep{penedo2025codeforces}, and HumanEval+~\citep{chen2021evaluating}.
    \item \textit{Security:} We jailbreak target LLMs with SEMA~\citep{feng2026sema} on AdvBench~\citep{zou2023universal} and HarmBench~\citep{mazeika2024harmbench}.
\end{itemize}

\subsection{Metrics}
Evaluation metrics follow standard practice in each domain. For math, we report Avg@16 accuracy~\citep{yu2025dapo, zhu2026flowrl}. For Codeforces, we report competitive-programming Rating and Percentile, following DeepCoder~\citep{deepcoder2025}. For security, we report attack success rate ASR@1, following GCG~\citep{zou2023universal, feng2026sema}.

\subsection{Baselines}
We organize baselines into two families: (1) \textit{distribution-matching}, FlowRL~\citep{zhu2026flowrl}; and (2) \textit{reward maximization}, REINFORCE++ (R++)~\citep{hu2025reinforcepp}, PPO~\citep{schulman2017proximal}, and GRPO/GRPO+~\citep{shao2024deepseekmath, deepcoder2025}. The exact subset used in each experiment is noted in the corresponding results section.

\subsection{Implementation}
Our implementation is built on \textit{verl}\footnote{\url{https://github.com/verl-project/verl}}, \textit{deepcoder}\footnote{\url{https://github.com/agentica-project/rllm}}, \textit{FlowRL}\footnote{\url{https://github.com/Xuekai-Zhu/FlowRL}}, and \textit{SEMA}\footnote{\url{https://github.com/fmmarkmq/SEMA}} in PyTorch. Unless otherwise specified, we adopt each framework's default settings for batch size, learning rate, and number of training steps. The full hyperparameters and training configurations are reported in Tables~\ref{tab:hyperparams}, \ref{tab:math-configs}, \ref{tab:code-configs}, \ref{tab:training-configs-deepcoder}, and \ref{tab:moe-configs}.

\subsection{Diagnostic and Synthetic Setup}
\label{app:diagnostic_setup}
\noindent\textbf{Random-$\log Z$ replacement.} For the diagnostic in Sec.~\ref{subsec:diagnostic}, we replace $\log Z_\phi(\mathbf{x})$ at every training step with an independent sample $\log Z \sim \mathcal{N}(\mu, \sigma^2)$, where $(\mu, \sigma^2)$ are matched to the empirical distribution of $\log Z_\phi(\mathbf{x})$ observed under standard FlowRL training~\citep{zhu2026flowrl}. We use $\mu = 0.5$ and $\sigma = 1$. All other settings (optimizer, batch size, learning rate, sampling temperature) are kept identical to the FlowRL baseline.

\noindent\textbf{Synthetic distribution-matching benchmark.} We construct a synthetic target as a mixture of three Gaussian components with distinct centers in a 100-dimensional space. Although this is far smaller than realistic LLM settings, it preserves the few-hundred-step optimization regime characteristic of LLM post-training and isolates distribution-matching behavior from confounders such as model capacity. We compare GRPO, FlowRL, FlowRL-RandomLogZ, and \method at training steps $\{100, 500, 1000, 5000\}$; full training dynamics are reported in Appendix~\ref{app:dmc}.

\subsection{Per-Experiment Training Configurations}
\label{app:per_exp_configs}
\noindent\textbf{Codeforces (dense, 14B).} For the frontier comparison in Figure~\ref{fig:cf-comparison}, \method is trained on the same data and two-stage curriculum released by DeepCoder~\citep{deepcoder2025}, and the full configuration in provided in Table~\ref{tab:training-configs-deepcoder}. Due to compute constraints, we train \method for 400 total steps versus 600 for DeepCoder, providing a fair, but under-trained comparison.

\noindent\textbf{MoE training.} For Qwen3-30B-A3B and Qwen3-235B-A22B, we apply \method without modification across scales: the hyperparameters, data, and training schedule used at 30B are reused at 235B. This shared configuration is listed in Table~\ref{tab:moe-configs}.

\section{Training Curve}
\label{app:training}
\begin{figure*}[ht]
    \centering
    \includegraphics[width=1\linewidth]{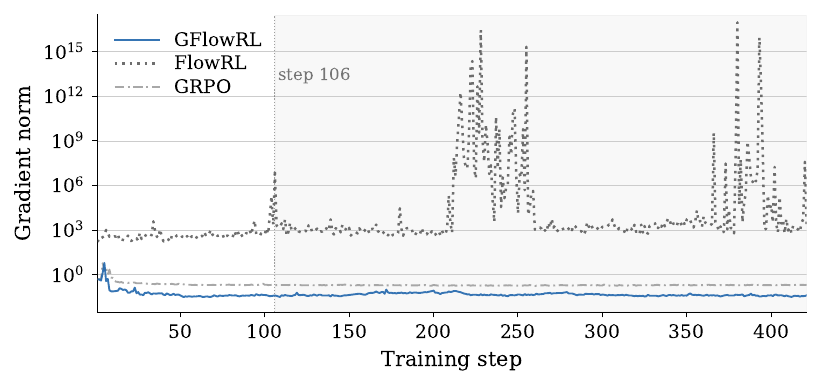}
    \caption{Gradient norm over training steps on Qwen2.5-7B math reasoning tasks. FlowRL suffers 55 gradient explosions $\geq 10^6$ beginning at step 106, while GFlowRL and GRPO remain stable throughout.}
    \label{fig:grad_norm}    
\end{figure*}

The training curves, as shown in Figure~\ref{fig:grad_norm}, further corroborate this instability: unlike GRPO and GFlowRL, whose gradient norms remain relatively smooth throughout optimization, FlowRL displays repeated, abrupt spikes spanning many orders of magnitude. These excursions are not isolated numerical artifacts but occur persistently across training, with $55$ of $421$ steps exceeding a gradient norm of $10^6$. Such behavior indicates that the learned normalization term $Z_\phi$ destabilizes policy updates, allowing poorly calibrated estimates of $\log Z_\phi(\mathbf{x})$ to dominate the optimization signal despite providing no measurable downstream benefit.

\section{Additional Results}
\label{app:additional_results}
In this appendix, we provide additional results which demonstrate the superior performance of \method.

\subsection{Extended Comparison Against Additional Baselines}
\label{app:comp_logs}
\begin{table*}[!h]
  \centering
  \caption{\textbf{Effect of $\log Z$ estimation method on math reasoning.} Average accuracy across six math reasoning benchmarks (i.e., the Test Math Benchmarks in Table~\ref{tab:data_stats}) with Qwen2.5-7B as the backbone.}
  \label{tab:logz_comp}
  \vspace{5pt}
  \begin{tabular}{l|cccc}
    \toprule[1.25pt]
    \textbf{Method} & Backbone & ConstantLogZ & FOR & GFlowRL \\
    \midrule[1.1pt]
    Average Acc & 23.02 & 25.34 & 29.53 & 40.92 \\
    \bottomrule[1.25pt]
  \end{tabular}
\end{table*}
As shown in Table~\ref{tab:logz_comp}, \method (40.92) substantially outperforms all alternatives, nearly doubling the backbone's accuracy (23.02) and beating the next-best method, FOR (29.53), by 11.4 points. ConstantLogZ (25.34) barely improves over the backbone, indicating that naive $\log Z$ handling is a real bottleneck. Proper estimation is what drives the gain.

\subsection{Dense 32B Math Results}
\label{app:math_32b}
\begin{table*}[ht]
  \centering
  \caption{\textbf{Results on math reasoning benchmarks for dense models.} We report Avg@16 accuracy with relative improvements over the backbone shown as subscripts. Positive gains are shown in \textcolor{scoregreen}{green} and negative changes in \textcolor{red}{red}. \method outperforms or is competitive with all baselines on the 32B model scale.}
\label{tab:math_results}
  \resizebox{\textwidth}{!}{ 
  \begin{tabular}{l|llllll|l}
      \toprule[1.25pt]
        \textbf{Models} 
        & \textbf{AIME24} 
        & \textbf{AIME25}
        & \textbf{AMC23}
        & \textbf{MATH500}
        & \textbf{Minerva}
        & \textbf{Olympiad}
        & \textbf{Avg} \\
      \midrule[1.1pt]  
      \rowcolor[rgb]{0.93,0.93,0.93}
      \multicolumn{8}{c}{\texttt{Qwen2.5-32B-Base, Max Response Len = 8K tokens}} \\
      \textcolor{gray}{\texttt{Backbone}} & \textcolor{gray}{4.58} & \textcolor{gray}{2.08} & \textcolor{gray}{28.59} & \textcolor{gray}{52.48} & \textcolor{gray}{26.99} & \textcolor{gray}{21.37} & \textcolor{gray}{22.68} \\
      R++ & $14.79_{\textcolor{scoregreen}{+10.21}}$ & $9.17_{\textcolor{scoregreen}{+7.09}}$ & $52.65_{\textcolor{scoregreen}{+24.06}}$ & $44.35_{\textcolor{red}{-8.13}}$ & $17.37_{\textcolor{red}{-9.62}}$ & $24.52_{\textcolor{scoregreen}{+3.15}}$ & 27.14 \\
      PPO & $26.87_{\textcolor{scoregreen}{+22.29}}$ & $20.41_{\textcolor{scoregreen}{+18.33}}$ & $76.40_{\textcolor{scoregreen}{+47.81}}$ & $69.17_{\textcolor{scoregreen}{+16.69}}$ & $28.79_{\textcolor{scoregreen}{+1.80}}$ & $37.90_{\textcolor{scoregreen}{+16.53}}$ & 43.26 \\
      GRPO & $23.12_{\textcolor{scoregreen}{+18.54}}$ & $14.58_{\textcolor{scoregreen}{+12.50}}$ & $76.87_{\textcolor{scoregreen}{+48.28}}$ & $61.60_{\textcolor{scoregreen}{+9.12}}$ & $18.95_{\textcolor{red}{-8.04}}$ & $34.94_{\textcolor{scoregreen}{+13.57}}$ & 38.34 \\
      FlowRL & $23.95_{\textcolor{scoregreen}{+19.37}}$ & $\mathbf{21.87}_{\textcolor{scoregreen}{+19.79}}$ & $73.75_{\textcolor{scoregreen}{+45.16}}$ & $80.75_{\textcolor{scoregreen}{+28.27}}$ & $38.21_{\textcolor{scoregreen}{+11.22}}$ & $51.83_{\textcolor{scoregreen}{+30.46}}$ & 48.39 \\
    FlowRL-RandomLogZ & $\mathbf{30.42}_{\textcolor{scoregreen}{+25.84}}$ & $16.46_{\textcolor{scoregreen}{+14.38}}$ & $\mathbf{79.38}_{\textcolor{scoregreen}{+50.79}}$ & $72.89_{\textcolor{scoregreen}{+20.41}}$ & $27.78_{\textcolor{scoregreen}{+0.79}}$ & $41.46_{\textcolor{scoregreen}{+20.09}}$ & 44.73 \\      \midrule
      \method & $24.17_{\textcolor{scoregreen}{+19.59}}$ & $18.75_{\textcolor{scoregreen}{+16.67}}$ & $76.25_{\textcolor{scoregreen}{+47.66}}$ & $\mathbf{86.31}_{\textcolor{scoregreen}{+33.83}}$ & $\mathbf{42.97}_{\textcolor{scoregreen}{+15.98}}$ & $\mathbf{54.08}_{\textcolor{scoregreen}{+32.71}}$ & $\mathbf{50.42}$ \\
      \bottomrule[1.25pt]
  \end{tabular}}
\end{table*}
Table~\ref{tab:math_results} further evaluates the 32B setting. Consistent with the main results, \method achieves the best overall average score (\textit{49.40}), improving over PPO, GRPO, and FlowRL. Compared with FlowRL, \method gains $+1.01$ average points and obtains the best results on MATH500 and Minerva, suggesting stronger generalization on broader math reasoning tasks.

We also include FlowRL-RandomLogZ as an ablation. While it performs well on AIME24 and AMC23, its lower scores on AIME25, Minerva, and Olympiad lead to a lower average than both FlowRL and \method. This suggests that randomizing the log-normalizer can help on some benchmarks but is less stable across the full set of math benchmarks.
\definecolor{promptbg}{HTML}{F5F7FA}
\definecolor{promptframe}{HTML}{1F5FAF}
\definecolor{prompttitle}{HTML}{143E73}
\definecolor{prompttext}{HTML}{5F5F5F}

\newtcolorbox{promptbox}[1][]{
  colback=promptbg,
  colframe=promptframe,
  coltitle=prompttitle,
  coltext=prompttext,
  coltitle=white,  
  boxrule=0.6pt,
  arc=2pt,
  left=6pt, right=6pt, top=6pt, bottom=6pt,
  fonttitle=\bfseries\small,
  fontupper=\ttfamily\footnotesize,
  breakable,
  #1
}

\begin{figure}[th]
\centering
\begin{promptbox}[title={Diversity Judge Prompt}, height=0.7\textheight]
\begin{verbatim}
System: You are evaluating the DIVERSITY of solution approaches for a
mathematics competition problem. Focus on detecting even SUBTLE differences
in methodology that indicate different problem-solving strategies.

PROBLEM:
{{ problem }}

16 SOLUTION ATTEMPTS:
{{ formatted_responses }}

EVALUATION CRITERIA - Rate diversity from 1 to 5:

Score 1 - Minimal Diversity:
- 14+ responses use essentially identical approaches
- Same mathematical setup, same variable choices, same solution path
- Only trivial differences (arithmetic, notation, wording)
- Indicates very low exploration/diversity in the generation process

Score 2 - Low Diversity:
- 11-13 responses use the same main approach
- 1-2 alternative approaches appear but are rare
- Minor variations within the dominant method (different substitutions,
  orderings)
- Some exploration but heavily biased toward one strategy

Score 3 - Moderate Diversity:
- 7-10 responses use the most common approach
- 2-3 distinct alternative approaches present
- Noticeable variation in problem setup or mathematical techniques
- Balanced mix showing reasonable exploration

Score 4 - High Diversity:
- 4-6 responses use the most common approach
- 3-4 distinct solution strategies well-represented
- Multiple mathematical techniques and problem framings
- Strong evidence of diverse exploration strategies

Score 5 - Maximum Diversity:
- No single approach dominates (<=3 responses use same method)
- 4+ distinctly different solution strategies
- Wide variety of mathematical techniques and creative approaches
- Excellent exploration and generation diversity

IMPORTANT: Focus on the DIVERSITY of the attempted approaches. Return ONLY
a number from 1 to 5.
\end{verbatim}
\end{promptbox}
\caption{Prompt used for the LLM-as-judge diversity evaluation.}
\label{fig:diversity-prompt}
\end{figure}

\subsection{Full results on adversarial red-teaming}
\label{app:sema}
\begin{table*}[ht]
  \centering
  \caption{\textbf{Comparison of ASR@1 $\uparrow$ for victim models on \textit{AdvBench} (\textit{LLM Classifier}) and \textit{HarmBench} (\textit{HarmBench Classifier}) for dense models.} All victim models are the instruction-tuned version rather than the base model, while we omitted the ``Instruct'' suffix for simplicity. FlowRL fails to converge.}
  \vspace{5pt}
  \label{tab:sema_main_result}
  \resizebox{\textwidth}{!}{
  \begin{tabular}{l|llll|llll}
    \toprule[1.25pt]
    \textbf{Attackers / Victim Models}
    & \multicolumn{4}{c|}{\textbf{AdvBench} \citep{zou2023universal}}
    & \multicolumn{4}{c}{\textbf{HarmBench} \citep{mazeika2024harmbench}} \\
    \cmidrule(lr){2-5} \cmidrule(lr){6-9}
    & Qwen2.5-3B & Llama-3.1-8B & GPT-4.1-mini & Avg 
    & Qwen2.5-3B & Llama-3.1-8B & GPT-4.1-mini & Avg \\
    \midrule[1.1pt]
    \rowcolor[rgb]{0.93,0.93,0.93}
    \multicolumn{9}{c}{\texttt{Single-turn}} \\
    FlipAttack \citep{liu2024flipattackjailbreakllmsflipping}
    & 1.7 & 1.2 & 31.4 & 11.4
    & 0.0 & 1.9 & 44.7 & 15.5 \\
    ADV-LLM \citep{sun2025advllmiterativeselftuningllms}
    & 68.1 & 63.7 & 6.7 & 46.2
    & 66.7 & 69.2 & 29.6 & 55.1 \\
    Jailbreak-R1 \citep{guo2025jailbreakr1exploringjailbreakcapabilities}
    & 23.1 & 16.2 & 15.0 & 18.1
    & 30.8 & 21.4 & 15.1 & 22.4 \\
    \midrule[1.1pt]
    \rowcolor[rgb]{0.93,0.93,0.93}
    \multicolumn{9}{c}{\texttt{Multi-turn}} \\
    Jigsaw Puzzle \citep{yang2024jigsawpuzzlessplittingharmful}
    & 22.9 & 36.7 & 58.7 & 39.4
    & 17.6 & 32.7 & 62.3 & 37.5 \\
    Crescendo \citep{russinovich2025greatwritearticlethat}
    & 36.0 & 35.2 & 48.5 & 39.9
    & 40.9 & 34.0 & 47.8 & 40.9 \\
    GOAT \citep{pavlova2024automatedredteaminggoat}
    & 27.5 & 8.5 & 31.9 & 22.6
    & 22.6 & 4.4 & 29.6 & 18.9 \\
    CoA \citep{yang2024chainattacksemanticdrivencontextual}
    & 11.2 & 11.0 & 13.1 & 11.7
    & 17.6 & 12.0 & 19.5 & 16.4 \\
    FITD \citep{weng2025footinthedoormultiturnjailbreakllms}
    & 20.0 & 21.0 & 22.3 & 21.1
    & 28.3 & 23.9 & 18.2 & 23.5 \\
    ActorAttack \citep{ren2025llmsknowvulnerabilitiesuncover}
    & 8.8 & 9.2 & 13.3 & 10.4
    & 7.7 & 9.6 & 11.5 & 9.6 \\
    X-Teaming \citep{rahman2025xteamingmultiturnjailbreaksdefenses}
    & 39.4 & 24.2 & 44.2 & 36.0
    & 45.3 & 22.0 & 44.7 & 37.3 \\
    SEMA \citep{feng2026sema}
    & 79.9 & 77.2 & 83.3 & 80.1
    & 74.5 & 70.6 & 79.8 & 75.0 \\
    \midrule
    \method & \textbf{80.2} & \textbf{81.2} & \textbf{86.1} & \textbf{82.5} & \textbf{79.9} & \textbf{73.0} & \textbf{85.5} & \textbf{79.5} \\
    \bottomrule[1.25pt]
  \end{tabular}}
\end{table*}

The full results in Table~\ref{tab:sema_main_result} compare \method against a broad set of single-turn and multi-turn red-teaming baselines. \method achieves the best average ASR@1 on both AdvBench and HarmBench, outperforming prior methods across all victim models. Compared with the strongest baseline, SEMA~\citep{feng2026sema}, \method improves average ASR by $+2.4$ points on AdvBench and $+4.5$ points on HarmBench. Notably, FlowRL fails to converge in this noisier, sparse-reward setting, while \method remains stable, highlighting the benefit of removing the high-variance $\log Z_\phi$ term.

\section{Diversity Judge Prompt}
\label{app:divs}
Table~\ref{tab:diversity}  in Section~\ref{subsec:astudy} reports the diversity scores for several models, as judged by GPT-o4-mini, on a scale from 1–5. The prompt template that is used to compute the diversity scores is provided in Figure~\ref{fig:diversity-prompt}. The course grained diversity scores are computed across the 16 solutions for each problem and range from Minimal Diversity (score = 1) to Maximum Diversity (score = 5), as \textit{EVALUATION CRITERIA in} the prompt. 
\newpage
\section{Distribution Matching Comparison}
\label{app:dmc}
\begin{figure}[!h]
    \centering
    \begin{subfigure}[b]{0.48\textwidth}
        \centering
        \includegraphics[width=\textwidth]{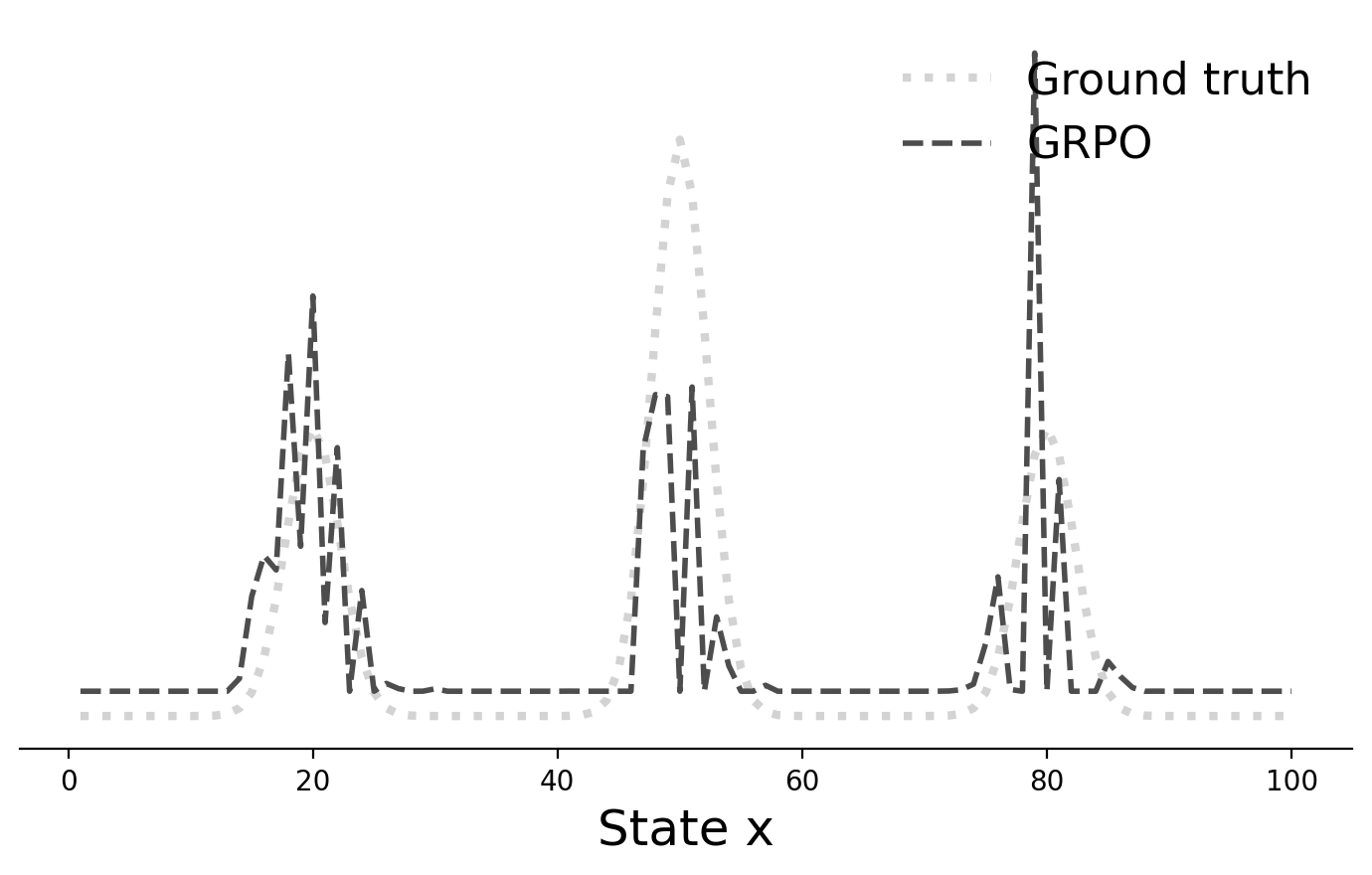}
        \caption{GRPO}
        \label{fig:grpo}
    \end{subfigure}
    \hfill
    \begin{subfigure}[b]{0.48\textwidth}
        \centering
        \includegraphics[width=\textwidth]{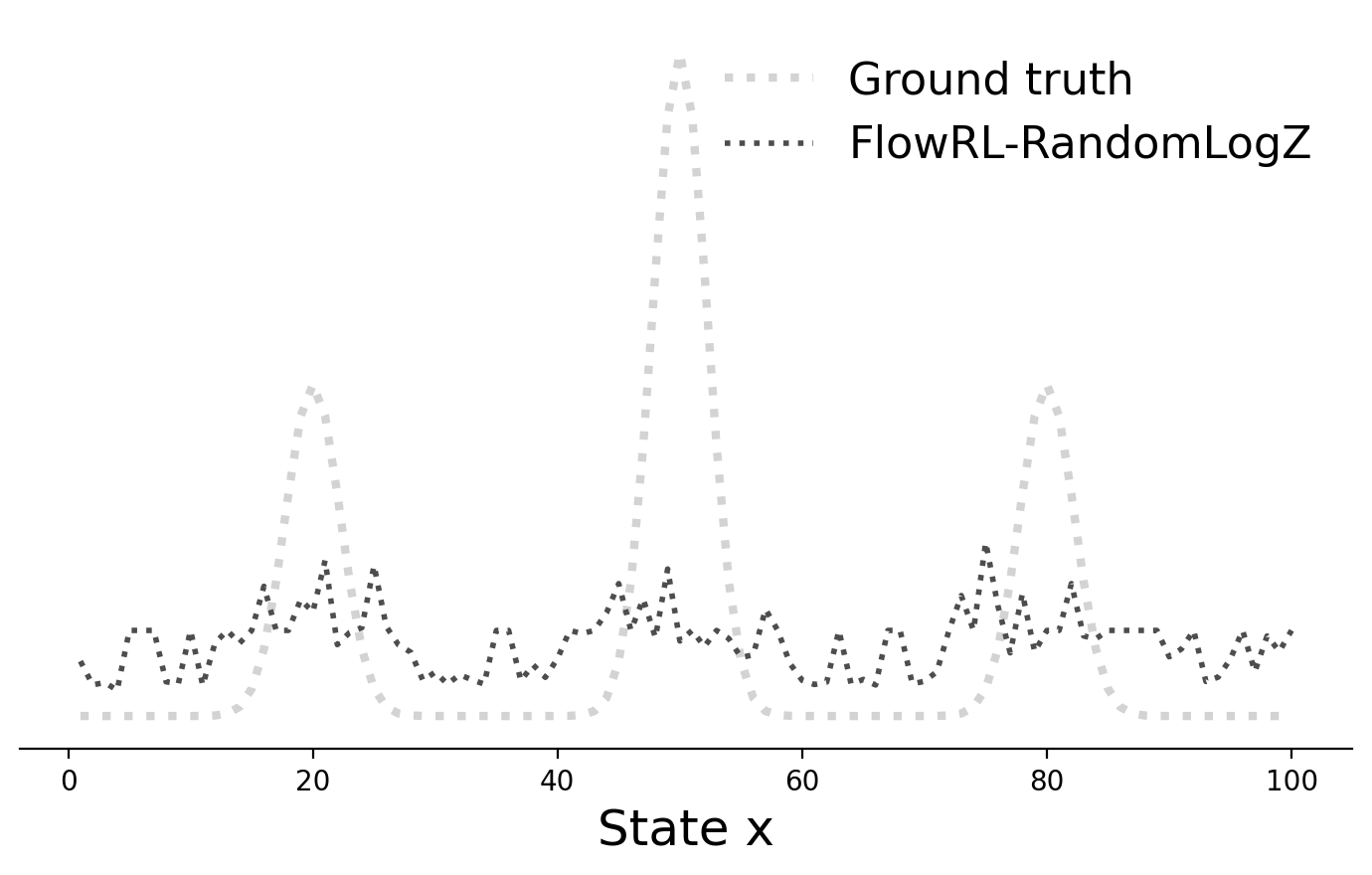}
        \caption{FlowRL-RandomLogZ}
        \label{fig:random}
    \end{subfigure}

    \vspace{0.5em} 

    \begin{subfigure}[b]{0.48\textwidth}
        \centering
        \includegraphics[width=\textwidth]{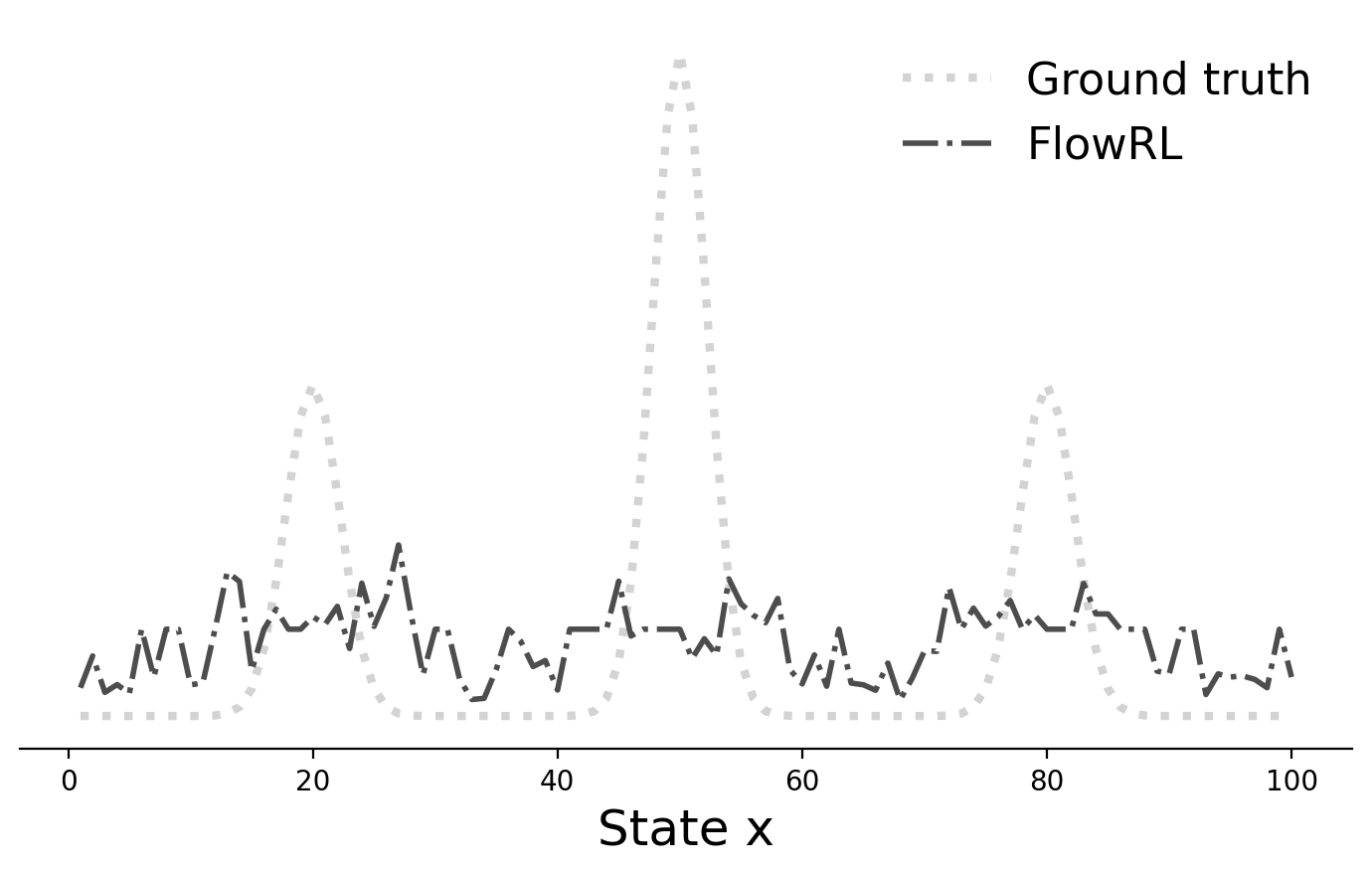}
        \caption{FlowRL}
        \label{fig:flowrl}
    \end{subfigure}
    \hfill
    \begin{subfigure}[b]{0.48\textwidth}
        \centering
        \includegraphics[width=\textwidth]{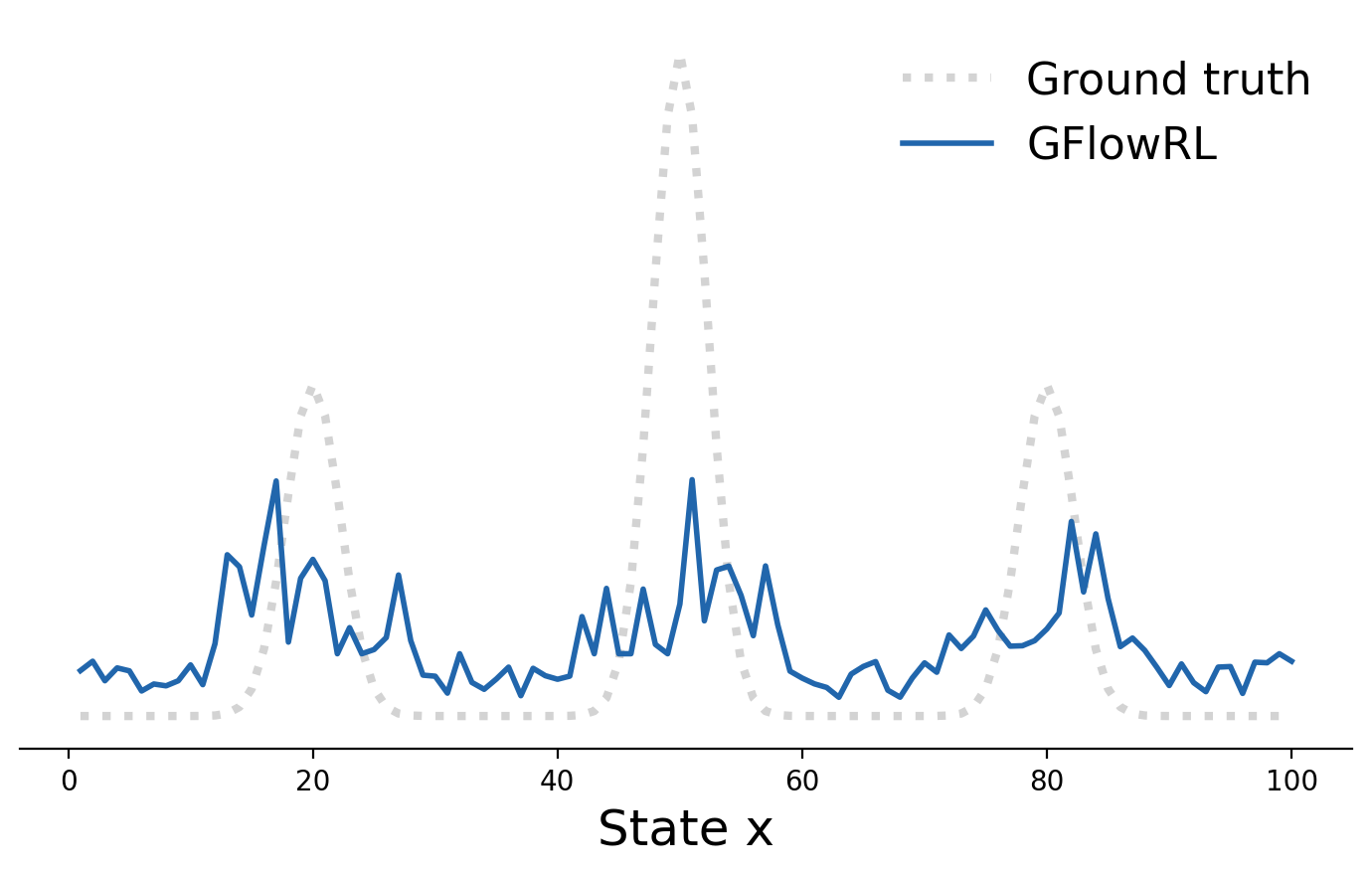}
        \caption{GFlowRL}
        \label{fig:gflowrl}
    \end{subfigure}

    \caption{Comparison of distribution matching across GRPO, FlowRL-RandomLogZ, FlowRL, and GFlowRL at the 100th training step.}
    \label{fig:step100}
\end{figure}
\begin{figure}[!h]
    \centering
    \begin{subfigure}[b]{0.48\textwidth}
        \centering
        \includegraphics[width=\textwidth]{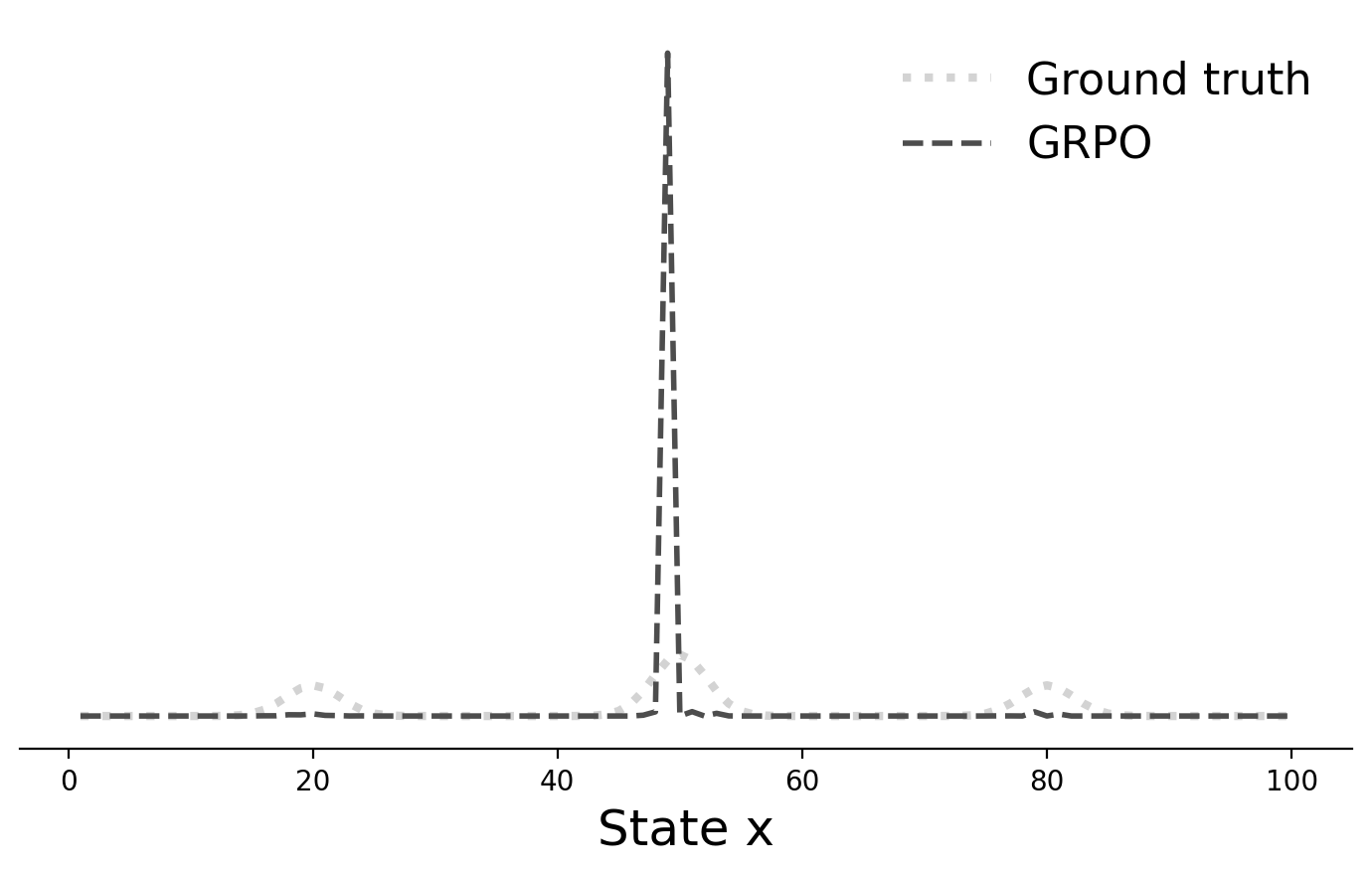}
        \caption{GRPO}
        \label{fig:grpo500}
    \end{subfigure}
    \hfill
    \begin{subfigure}[b]{0.48\textwidth}
        \centering
        \includegraphics[width=\textwidth]{figures/flowrl-randomlogz_500.png}
        \caption{FlowRL-RandomLogZ}
        \label{fig:random500}
    \end{subfigure}
    \vspace{0.5em} 
    \begin{subfigure}[b]{0.48\textwidth}
        \centering
        \includegraphics[width=\textwidth]{figures/flowrl_500.png}
        \caption{FlowRL}
        \label{fig:flowrl500}
    \end{subfigure}
    \hfill
    \begin{subfigure}[b]{0.48\textwidth}
        \centering
        \includegraphics[width=\textwidth]{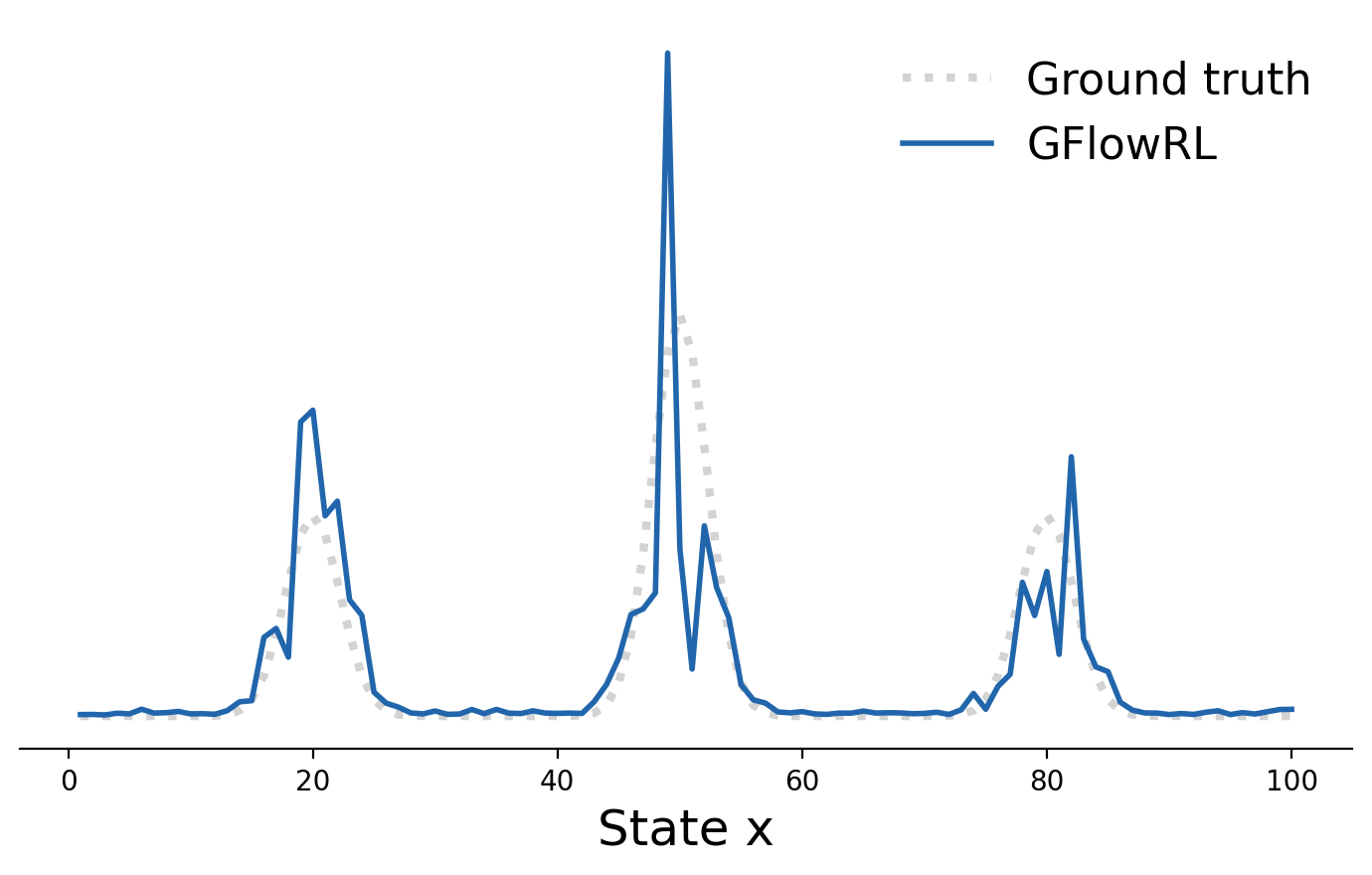}
        \caption{GFlowRL}
        \label{fig:gflowrl500}
    \end{subfigure}
    \caption{Comparison of distribution matching across GRPO, FlowRL-RandomLogZ, FlowRL, and GFlowRL at the 500th training step.}
    \label{fig:step500}
\end{figure}
\begin{figure}[!h]
    \centering
    \begin{subfigure}[b]{0.46\textwidth}
        \centering
        \includegraphics[width=\textwidth]{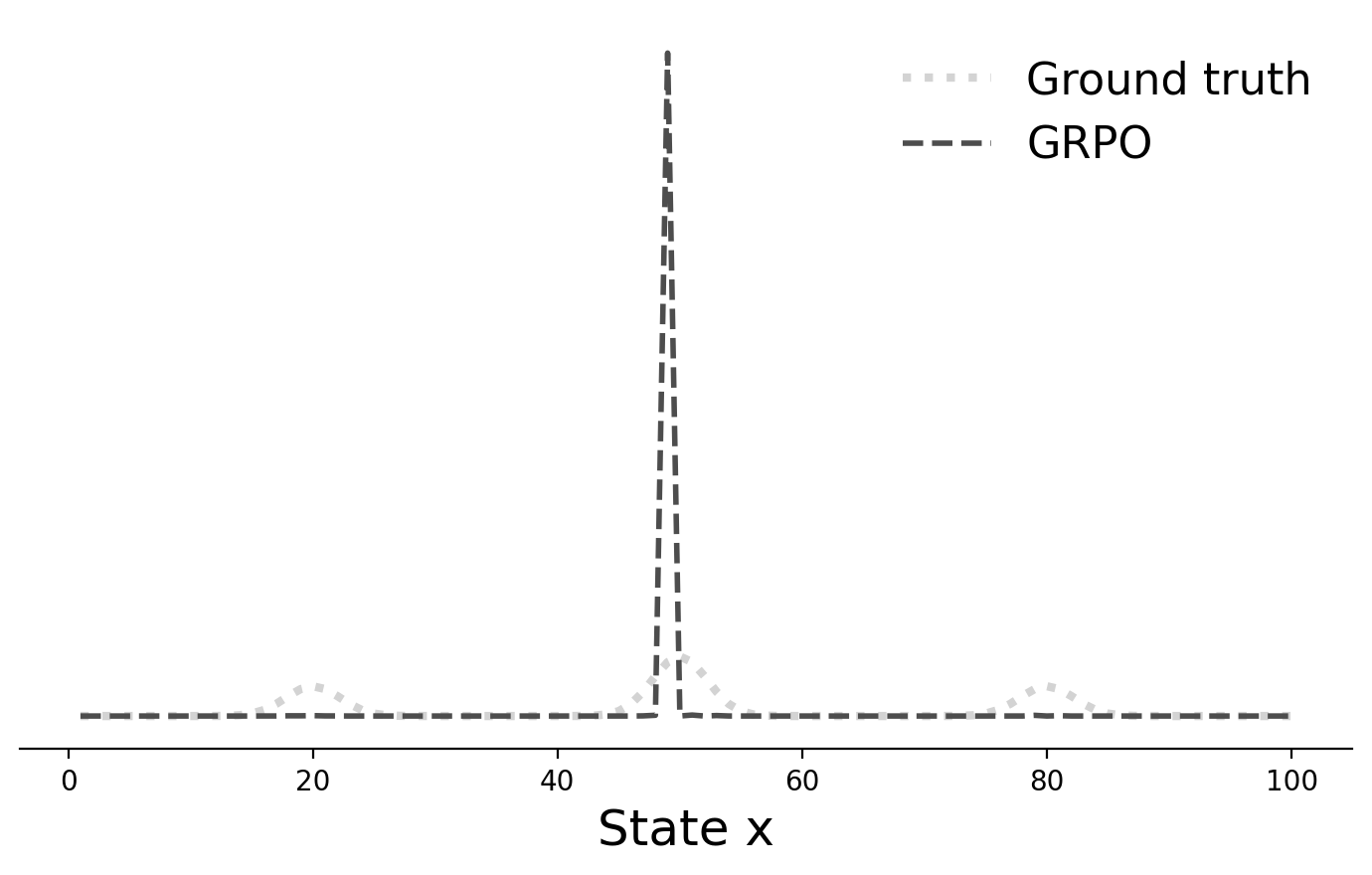}
        \caption{GRPO}
        \label{fig:grpo1000}
    \end{subfigure}
    \hfill
    \begin{subfigure}[b]{0.46\textwidth}
        \centering
        \includegraphics[width=\textwidth]{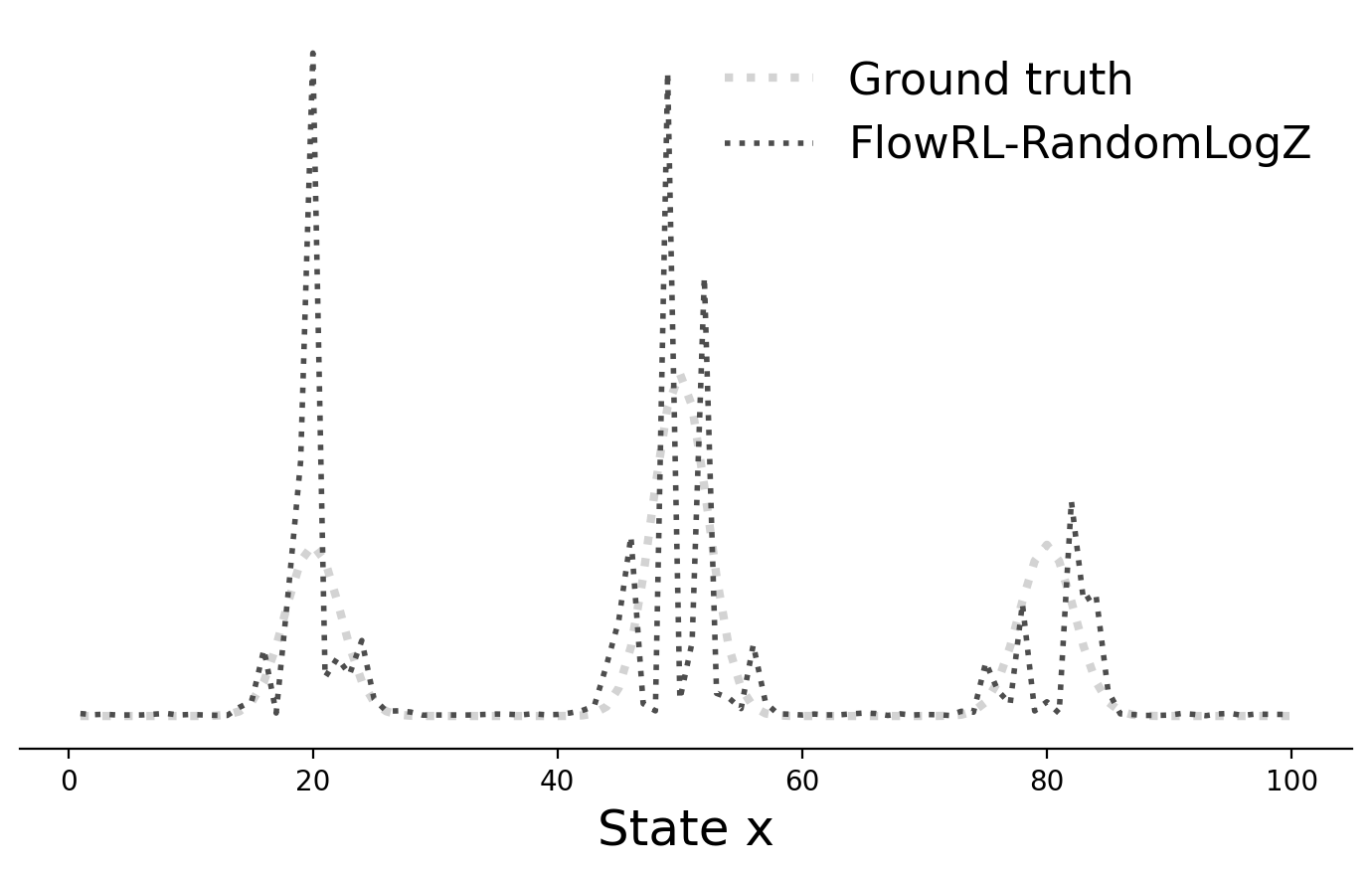}
        \caption{FlowRL-RandomLogZ}
        \label{fig:random1000}
    \end{subfigure}
    \vspace{0.5em} 
    \begin{subfigure}[b]{0.46\textwidth}
        \centering
        \includegraphics[width=\textwidth]{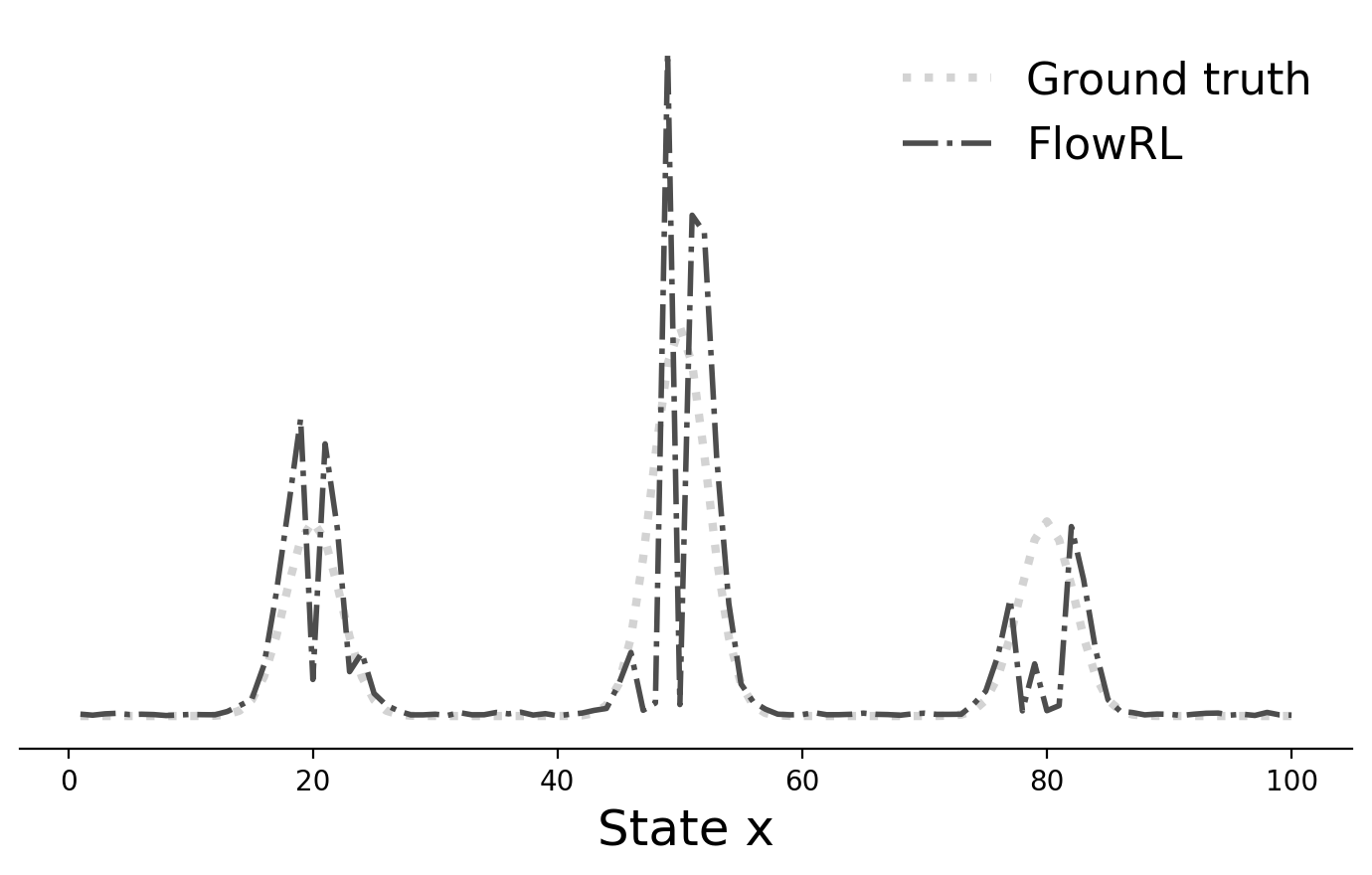}
        \caption{FlowRL}
        \label{fig:flowrl1000}
    \end{subfigure}
    \hfill
    \begin{subfigure}[b]{0.46\textwidth}
        \centering
        \includegraphics[width=\textwidth]{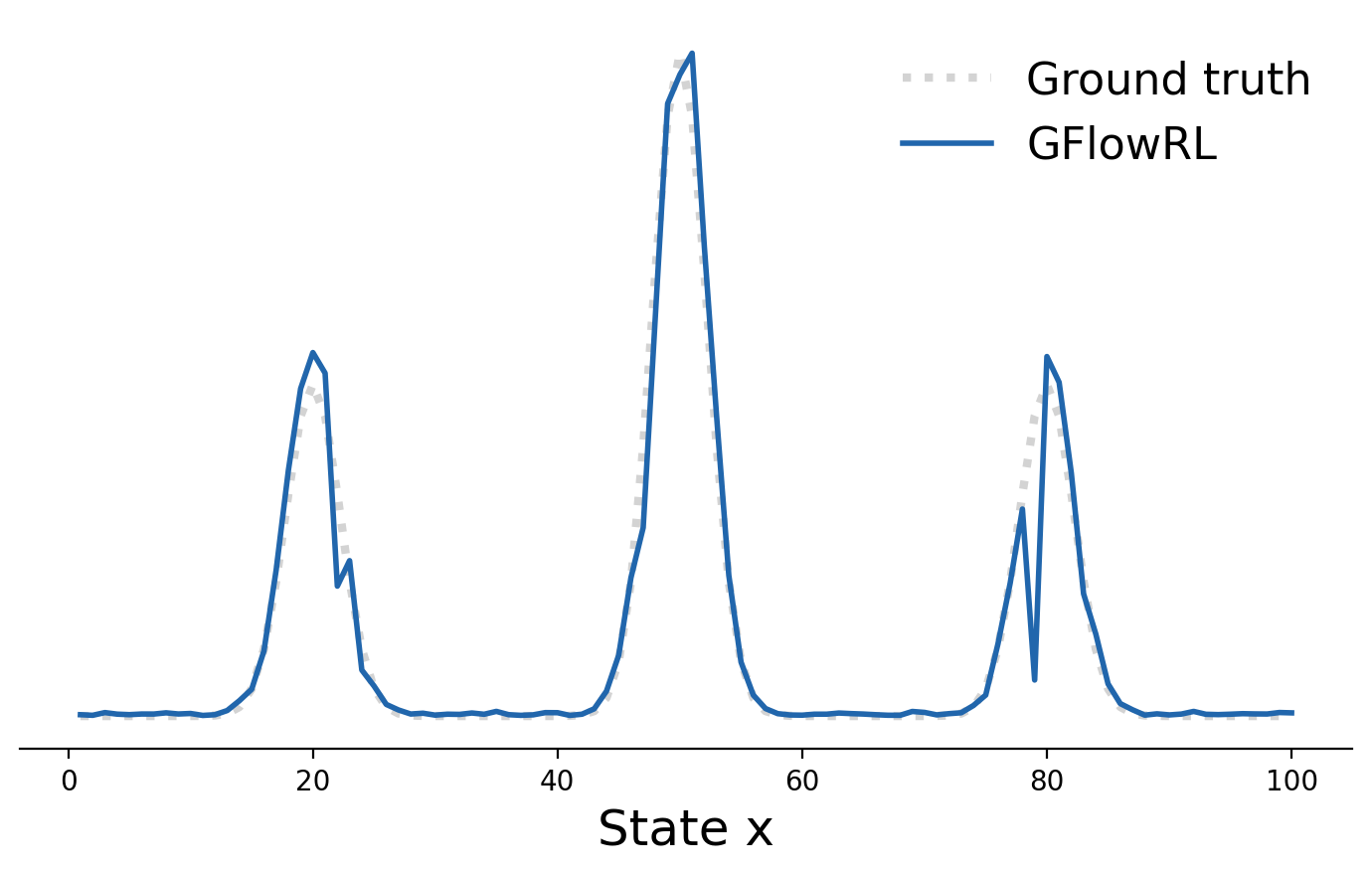}
        \caption{GFlowRL}
        \label{fig:gflowrl1000}
    \end{subfigure}
    \caption{Comparison of distribution matching across GRPO, FlowRL-RandomLogZ, FlowRL, and GFlowRL at the 1000th training step.}
    \label{fig:step1000}
\end{figure}
\begin{figure}[!h]
    \centering
    \begin{subfigure}[b]{0.48\textwidth}
        \centering
        \includegraphics[width=\textwidth]{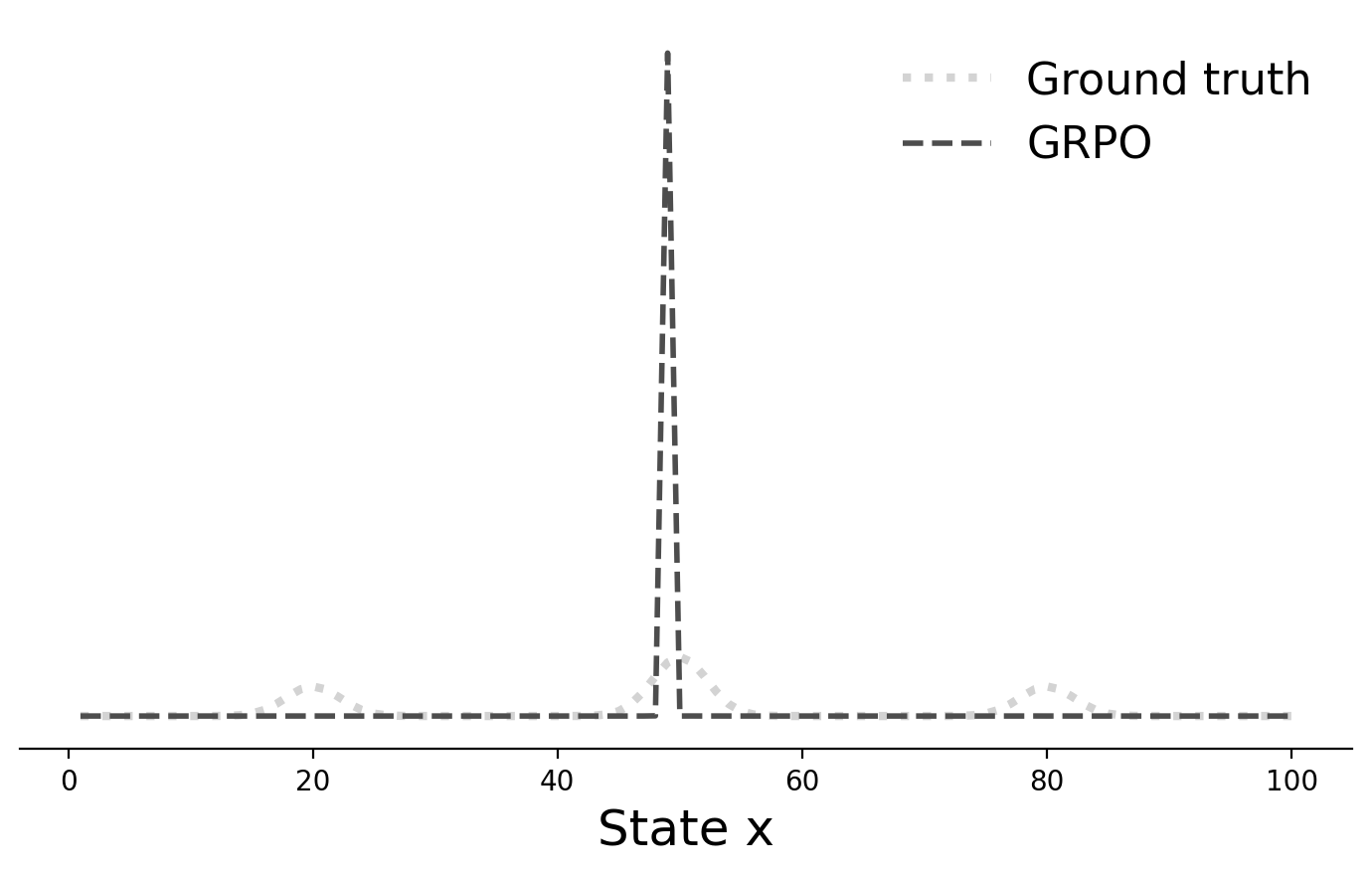}
        \caption{GRPO}
        \label{fig:grpo5000}
    \end{subfigure}
    \hfill
    \begin{subfigure}[b]{0.48\textwidth}
        \centering
        \includegraphics[width=\textwidth]{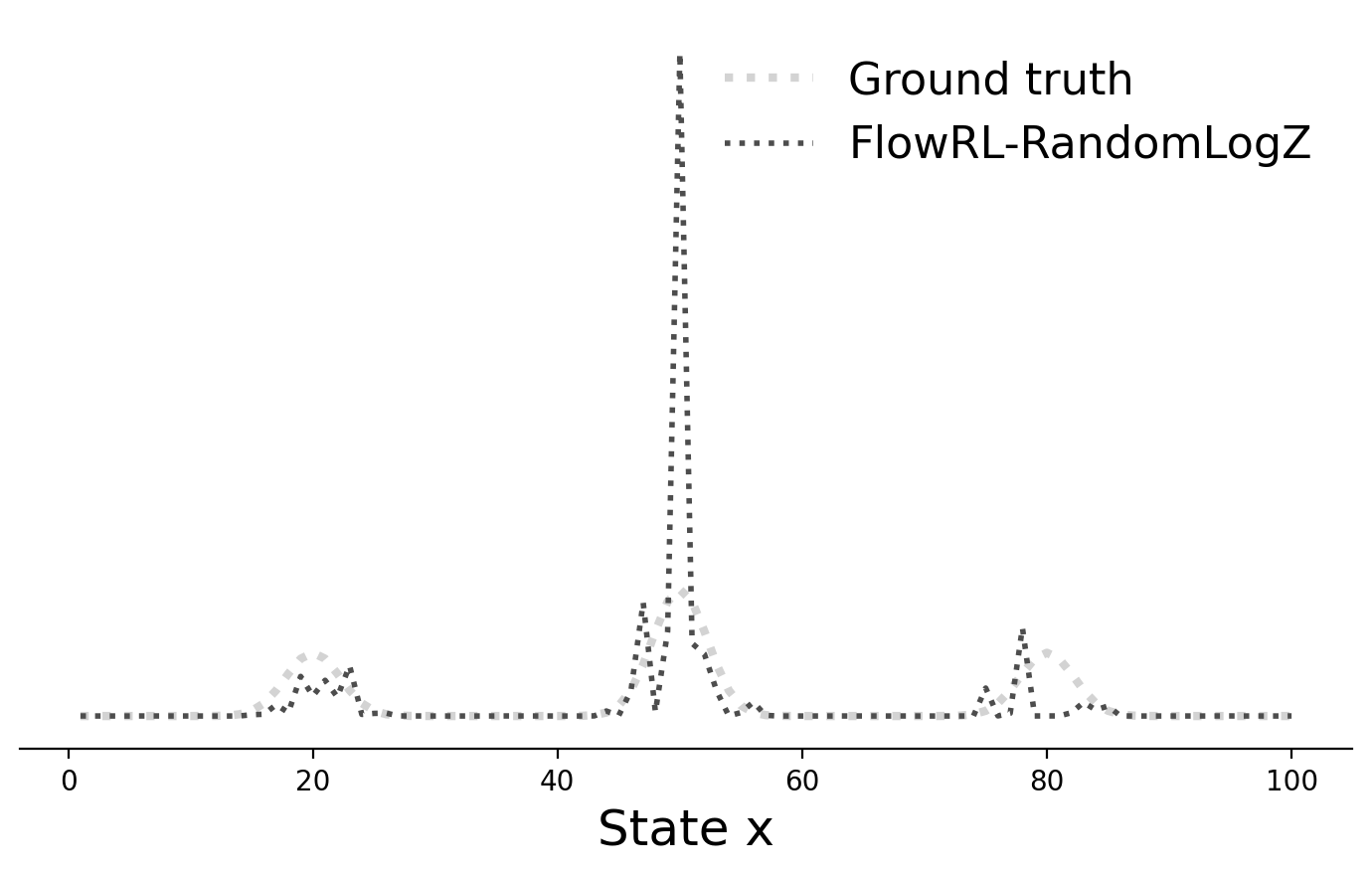}
        \caption{FlowRL-RandomLogZ}
        \label{fig:random5000}
    \end{subfigure}
    \vspace{0.5em} 
    \begin{subfigure}[b]{0.48\textwidth}
        \centering
        \includegraphics[width=\textwidth]{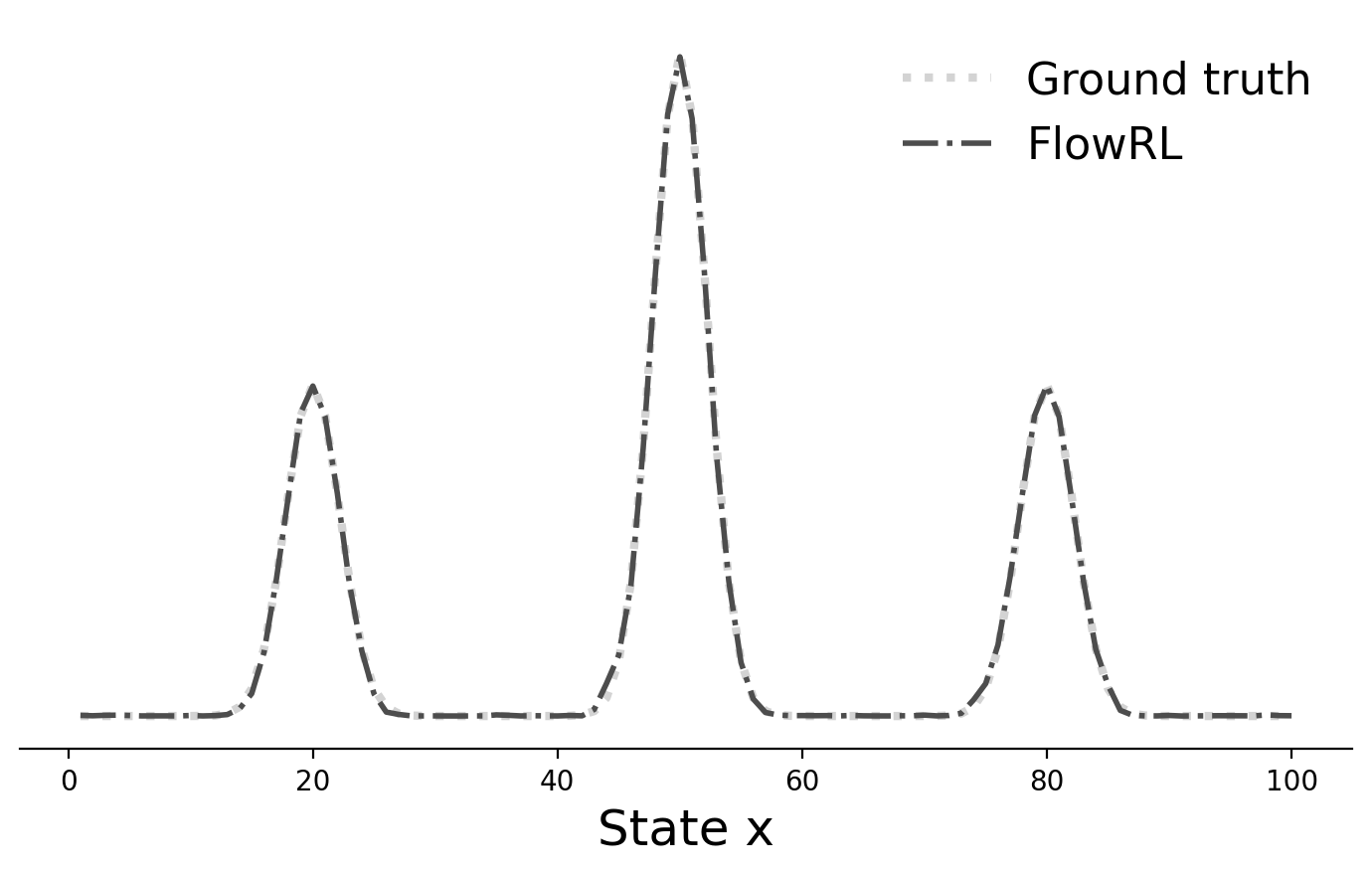}
        \caption{FlowRL}
        \label{fig:flowrl5000}
    \end{subfigure}
    \hfill
    \begin{subfigure}[b]{0.48\textwidth}
        \centering
        \includegraphics[width=\textwidth]{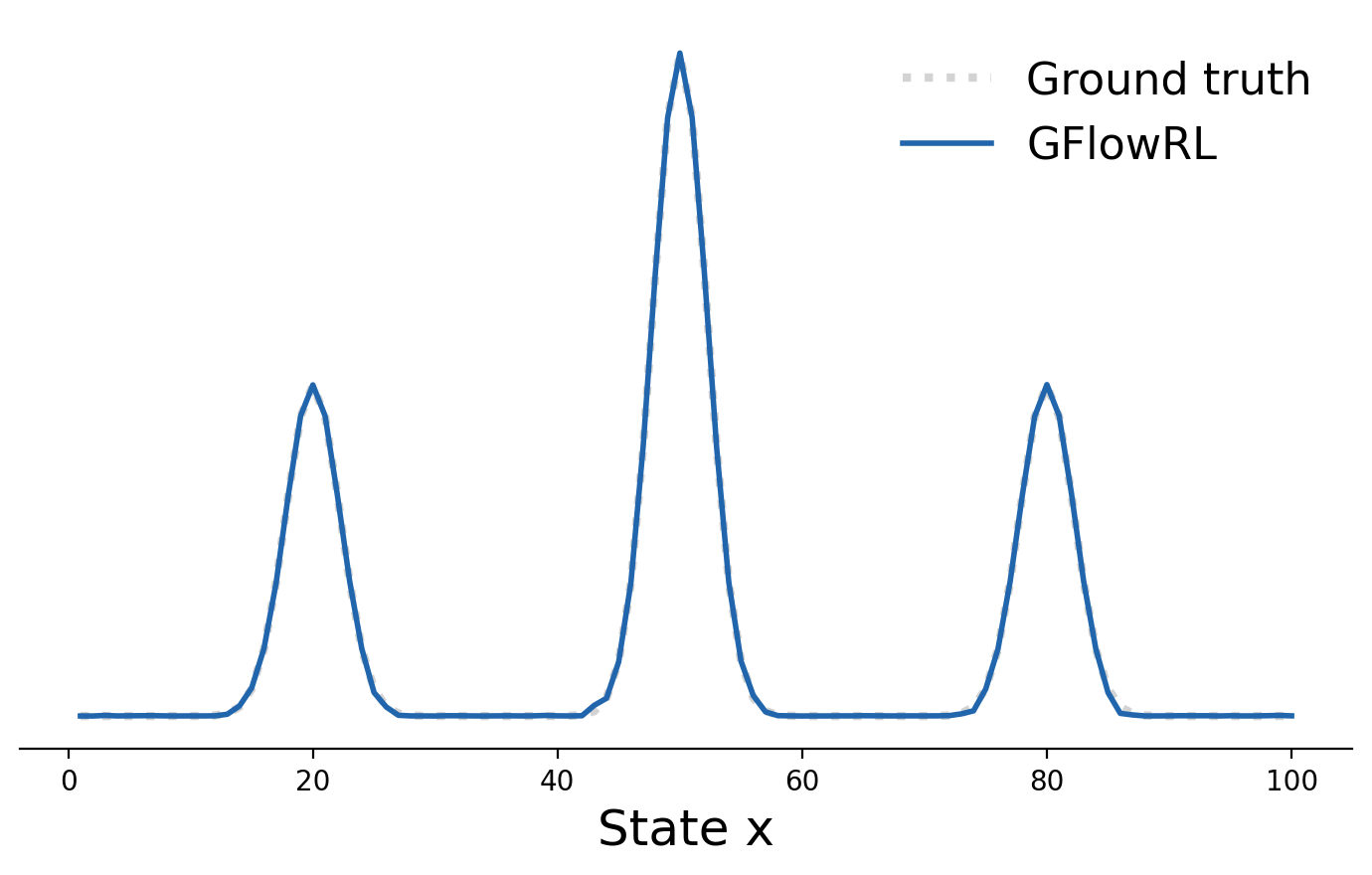}
        \caption{GFlowRL}
        \label{fig:gflowrl5000}
    \end{subfigure}
    \caption{Comparison of distribution matching across GRPO, FlowRL-RandomLogZ, FlowRL, and GFlowRL at the 5000th training step.}
    \label{fig:step5000}
\end{figure}
To evaluate distribution matching performance, we construct a synthetic target distribution defined as a mixture of three Gaussian components with distinct centers. We compare four algorithms: GRPO (a maximum-reward method), FlowRL, FlowRL-RandomLogZ, and GFlowRL. Figures~\ref{fig:step100}, \ref{fig:step500}, \ref{fig:step1000}, and \ref{fig:step5000} illustrate the learned distributions at different training stages (100, 500, 1000, and 5000 steps), respectively.

At the beginning of training (100 steps), GRPO achieves the best distribution fitting, as all methods are still far from convergence. While FlowRL and FlowRL-RandomLogZ produce near-random or uniform samples that do not yet capture the target structure, \method already begins to capture the underlying multimodal shape of the mixture distribution.

As training progresses to 500 steps, a regime comparable to typical LLM post-training settings (e.g., math reasoning), where optimization often spans only a few hundred steps, GRPO begins to exhibit mode collapse, concentrating its probability mass on a single mode. This behavior underscores a key limitation of maximum-reward approaches: although they can rapidly identify high-reward regions, they tend to sacrifice diversity. Meanwhile, FlowRL and FlowRL-RandomLogZ display similar behaviors, indicating that learning the partition function ($log Z$) has not yet had a significant effect. In contrast, \method starts to outperform the other baselines by more effectively distributing probability mass across multiple modes.

\newpage
\section{Qualitative Comparison: \method-14B vs.\ DeepCoder-14B-Preview}
\label{app:qualitative-comparison}

We present a qualitative comparison between \textbf{\method-14B} and \textbf{DeepCoder-14B-Preview}~\citep{deepcoder2025} on Codeforces competition problems.
Both models are fine-tuned from the same base model (DeepSeek-R1-Distill-Qwen-14B).
\method-14B is trained with \method and evaluated at 64k context length; DeepCoder-14B-Preview uses GRPO+ and is also evaluated at 64k context.
All evaluations use Pass@16 with temperature $0.6$ and top-$p = 0.95$.

\paragraph{Aggregate results.}
Table~\ref{tab:cf-per-problem} summarizes the per-problem breakdown across 408 Codeforces problems.
\method-14B exclusively solves 108 problems that DeepCoder cannot, while DeepCoder exclusively solves only~2.

\begin{table}[!h]
\centering
\small
\caption{Per-problem Codeforces comparison (408 problems, Pass@16).}
\label{tab:cf-per-problem}
\begin{tabular}{lrr}
\toprule
& \textbf{Count} & \textbf{\%} \\
\midrule
\method-14B solves, DeepCoder fails & 108 & 26.5 \\
DeepCoder solves, \method-14B fails & 2 & 0.5 \\
Both solve & 149 & 36.5 \\
Neither solves & 149 & 36.5 \\
\bottomrule
\end{tabular}
\end{table}

\paragraph{Selected examples.}
We report representative problems where one model succeeds and the other fails.
For each problem, we report the Avg@16(x) rate for the problem and display a representative valid solution (or the failed attempt). 
\vspace{0.8em}
\hrule
\vspace{0.4em}
\paragraph{Example 1 (CF \#362): \method solves, DeepCoder fails.}
\method: 93\% ($\cmark$) \dotfill DeepCoder: 0\% ($\xmark$)

\smallskip\noindent\textit{Problem.}\enspace You are given a positive integer $k$ and a set $S$ of all integers from $l$ to $r$ (inclusive). You can perform the following two-step operation any number of times (possibly zero): 1. First, choose a number $x$ from the set $S$, such that there are at least $k$ multiples of $x$ in $S$ (includin [...]

\smallskip\noindent{\small\textbf{\method-14B}}
\begin{verbatim}
import sys

t = int(sys.stdin.readline())
for _ in range(t):
    l, r, k = map(int, sys.stdin.readline().split())
    m = r // k
    if m < l:
        print(0)
    else:
        print(m - l + 1)
\end{verbatim}

\noindent{\small\textbf{DeepCoder-14B-Preview}} 

No executable code produced. The model did not converge on a solution.

\vspace{0.8em}
\hrule
\vspace{0.4em}
\paragraph{Example 2 (CF \#68): \method solves, DeepCoder fails.}
\method: 81\% ($\cmark$) \dotfill DeepCoder: 0\% ($\xmark$)

\smallskip\noindent\textit{Problem.}\enspace Turtle and Piggy are playing a number game. First, Turtle will choose an integer $x$, such that $l \le x \le r$, where $l, r$ are given. It's also guaranteed that $2l \le r$. Then, Piggy will keep doing the following operation until $x$ becomes $1$: Choose an integer $p$ such that $p \ge 2$ a [...]

\smallskip\noindent{\small\textbf{\method-14B}}
\begin{verbatim}
primes = [2, 3, 5, 7, 11, 13, 17, 19, 23, 29,
          31, 37, 41, 43, 47, 53, 59, 61, 67, 71]

def solve():
    import sys
    input = sys.stdin.read().split()
    t = int(input[0])
    idx = 1
    for _ in range(t):
        l = int(input[idx])
        r = int(input[idx+1])
        idx += 2

        max_omega = 0

        def dfs(index, current_product, current_omega):
            nonlocal max_omega
            if current_product > r:
                return
  ...
\end{verbatim}

\noindent{\small\textbf{DeepCoder-14B-Preview}}
\begin{verbatim}
   while product * p <= r:
       product *= p
       sum +=1

But this would give the maximum product that is <= r,
composed of the smallest primes, each multiplied as
much as possible.

Wait, no. Because this would multiply each prime once,
not multiple times.

Wait, no. Wait, perhaps I can adjust this:

product = 1
sum =0

for p in primes:
  ...
\end{verbatim}

\vspace{0.8em}
\hrule
\vspace{0.4em}
\paragraph{Example 3 (CF \#156): \method solves, DeepCoder fails.}
\method: 81\% ($\cmark$) \dotfill DeepCoder: 0\% ($\xmark$)

\smallskip\noindent\textit{Problem.}\enspace In his favorite cafe Kmes once again wanted to try the herring under a fur coat. Previously, it would not have been difficult for him to do this, but the cafe recently introduced a new purchasing policy. Now, in order to make a purchase, Kmes needs to solve the following problem: $n$ cards with pri [...]

\smallskip\noindent{\small\textbf{\method-14B}}
\begin{verbatim}
import math

def get_divisors(x):
    divisors = set()
    for i in range(1, int(math.isqrt(x)) + 1):
        if x % i == 0:
            divisors.add(i)
            divisors.add(x // i)
    return divisors

t = int(input())
for _ in range(t):
    n, x = map(int, input().split())
    a = list(map(int, input().split()))
    divisors = get_divisors(x)
    segments = 0
    current_products = set()
    for num in a:
  ...
\end{verbatim}

\noindent{\small\textbf{DeepCoder-14B-Preview}}
\begin{verbatim}
for each a_i in a:
   temp = set()
   temp.add(a_i)  # subset consisting of only a_i

   for p in S:
       product = p * a_i
       if product > x:
           continue
       temp.add(product)

   # Now, the new possible products are S union temp
  ...
\end{verbatim}

\vspace{0.8em}
\hrule
\vspace{0.4em}
\paragraph{Example 4 (CF \#171): \method solves, DeepCoder fails.}
\method: 37\% ($\cmark$) \dotfill DeepCoder: 0\% ($\xmark$)

\smallskip\noindent\textit{Problem.}\enspace After winning another Bed Wars game, Masha and Olya wanted to relax and decided to play a new game. Masha gives Olya an array $a$ of length $n$ and a number $s$. Now Olya's task is to find a non-negative number $x$ such that $\displaystyle\sum_{i=1}^{n} a_i \oplus x = s$. But she is very tired after [...]

\smallskip\noindent{\small\textbf{\method-14B}}
\begin{verbatim}
import sys
from collections import defaultdict

def main():
    input = sys.stdin.read().split()
    ptr = 0
    t = int(input[ptr])
    ptr += 1
    for _ in range(t):
        n = int(input[ptr])
        k = int(input[ptr+1])
        ptr +=2
        s = input[ptr]
        ptr +=1
        a_list = []
        for _ in range(n):
            a_str = input[ptr]
            ptr +=1
  ...
\end{verbatim}

\noindent{\small\textbf{DeepCoder-14B-Preview}} 

No executable code produced. The model did not converge on a solution.

\vspace{0.8em}
\hrule
\vspace{0.4em}
\paragraph{Example 5 (CF \#375): \method solves, DeepCoder fails.}
\method: 25\% ($\cmark$) \dotfill DeepCoder: 0\% ($\xmark$)

\smallskip\noindent\textit{Problem.}\enspace Suppose we partition the elements of an array $b$ into any number $k$ of non-empty multisets $S_1, S_2, \ldots, S_k$, where $k$ is an arbitrary positive integer. Define the score of $b$ as the maximum value of $\operatorname{MEX}(S_1)^{\text{*}} + \operatorname{MEX}(S_2) + \ldots$~[\ldots]

\smallskip\noindent{\small\textbf{\method-14B}}
\begin{verbatim}
import sys
MOD = 998244353

def main():
    input = sys.stdin.read
    data = input().split()
    idx = 0
    t = int(data[idx])
    idx += 1
    for _ in range(t):
        n = int(data[idx])
        idx += 1
        a = list(map(int, data[idx:idx+n]))
        idx += n

        cnt = [0] * n
        for num in a:
            if num < n:
  ...
\end{verbatim}

\noindent{\small\textbf{DeepCoder-14B-Preview}}

No executable code produced.
The model did not converge on a solution.

\vspace{0.8em}
\hrule
\vspace{0.4em}
\paragraph{Example 6 (CF \#75): DeepCoder solves, \method fails.}
\method: 0\% ($\xmark$) \dotfill DeepCoder: 6\% ($\cmark$)

\smallskip\noindent\textit{Problem.}\enspace Recently, the first-year student Maxim learned about the Collatz conjecture, but he didn't pay much attention during the lecture, so he believes that the following process is mentioned in the conjecture: There is a variable $x$ and a constant $y$. The following operation is performed $k$ times: [...]

\smallskip\noindent{\small\textbf{\method-14B}}
\begin{verbatim}
t = int(input())
for _ in range(t):
    x, y, k = map(int, input().split())
    while k > 0:
        if x < y - 1:
            break
        temp = x + 1
        while temp % y == 0:
            temp //= y
        if temp == x:
            break
        x = temp
        k -= 1
    x += k
    print(x)
\end{verbatim}

\noindent{\small\textbf{DeepCoder-14B-Preview}}
\begin{verbatim}
t = int(input())
for _ in range(t):
    x, y, k = map(int, input().split())
    while k > 0:
        if x == 1 and y == 2:
            break
        r = x % y
        if r == y - 1:
            # Process one step where x+1 is divisible by y
            temp = x + 1
            while temp % y == 0:
                temp //= y
            x = temp
            k -= 1
        else:
            s = (y - r - 1)
            if s >= k:
                x += k
  ...
\end{verbatim}

\vspace{0.8em}
\hrule

\end{document}